\DeclareMathOperator*{\argmax}{argmax}
\DeclareMathOperator*{\argmin}{argmin}
\DeclareMathOperator*{\argsup}{argsup}
\newtheorem{theo}{Theorem}[section]
\newtheorem{lemm}{Lemma}[section]
\newtheorem{proof}{Proof}
\newtheorem{defi}{Definition}[section]
\newcommand{\captionfonts}{\normalsize}
\long\def\@makecaption#1#2{%
  \vskip\abovecaptionskip
  \sbox\@tempboxa{{\captionfonts #1: #2}}%
  \ifdim \wd\@tempboxa >\hsize
    {\captionfonts #1: #2\par}
  \else
    \hbox to\hsize{\hfil\box\@tempboxa\hfil}%
  \fi
  \vskip\belowcaptionskip}
\begin{document}
\hspace{13.9cm}1

\ \vspace{20mm}\\

{\begin{center}
{\LARGE Modal Principal Component Analysis}

\ \\
{ \large Keishi Sando\\
University of Tsukuba.}\\
{ \large Hideitsu Hino (hino@ism.ac.jp)\\
The Institute of Statistical Mathematics.}
\end{center}}
{\bf Keywords:} 
principal component analysis, robust statistics
\thispagestyle{fancy}
\rhead{}
\lhead{}
\chead{Draft of the paper appear in \lq\lq Neural Computation\rq\rq}
\begin{center} {\bf Abstract} \end{center}
Principal component analysis (PCA) is a widely used method for data processing, such as for dimension reduction and visualization. Standard PCA is known to be sensitive to outliers, and thus, various robust PCA methods have been proposed. It has been shown that the robustness of many statistical methods can be improved using mode estimation instead of mean estimation, because mode estimation is not significantly affected by the presence of outliers. Thus, this study proposes a modal principal component analysis (MPCA), which is a robust PCA method based on mode estimation. The proposed method finds the minor component by estimating the mode of the projected data points. As theoretical contribution, probabilistic convergence property, influence function, finite-sample breakdown point and its lower bound for the proposed MPCA are derived. The experimental results show that the proposed method has advantages over the conventional methods.

\section{Introduction}
\label{sec:introductionandrelatedwork}
Principal component analysis~(PCA;~\citet{Jolliffe2002}) is one of the most popular methods used to find a low-dimensional subspace in which a given dataset lies. Classical PCA~(cPCA) can be formulated as a problem to find a subspace that minimizes the sum of squared residuals, but squared residuals make PCA vulnerable to outliers. A lot of PCA algorithms have been proposed to robustify cPCA. The R1-PCA proposed by \citet{Ding2006} replaced the sum of squared residuals in cPCA with the sum of unsquared ones. The optimal solution of R1-PCA has similar properties to those of cPCA, that is, it is given as the eigenvectors of the weighted covariance matrix and it is rotationally invariant. The absolute residuals can reduce negative impact of outliers, but an arbitrary large outlier can still break down the estimate. More recently, \citet{Zhang2014} and \citet{Lerman2015} relaxed the optimization problem so that the set of projection matrices is extended to a set of convex set of matrices, and derived a computationally efficient robust PCA algorithm called REAPER. 
Methods proposed in \citep{Ding2006,Zhang2014,Lerman2015} perform centering and subspace estimation independently. On the other hand, we can consider performing those operations simultaneously as proposed in~\citep{Nie2014}, and our proposed method adopts this strategy.

The sum of absolute deviation is another objective function for achieving robustness~\citep{Kwak2008,Brooks2013}. In \citep{Hubert2005}, a method based on a robust covariance matrix estimation was proposed, while coherence pursuit (CoP;~\citet{Rahmani2017}) considered the correlation between inliers and outliers. The key idea of the CoP approach is that under the assumption that inliers lie in the intersection of a low-dimensional subspace and the unit sphere, each inlier is likely to have high coherence with a large number of the other inliers. CoP takes advantage of this property in order to remove outliers. \citet{Rahmani2017} derived some theoretical conditions in which CoP works well.

There are robust PCA methods explicitly considering how outliers are distributed. \citet{Schmitt2016} developed a method based on outlier detection in high-dimensional space, while \citet{Pimentel2017} utilized the random sample consensus (RANSAC) with the subspace recovery theory. \citet{DBLP:conf/colt/XuCM10} proposed a high-dimensional robust PCA (HRPCA) based on subsamplings, while \citet{8451752} combined the classical trimmed median statistics with RANSAC for robust location estimation. Recently, a method called dual principal component pursuit (DPCP;~\citet{JMLR:v19:17-436}) has been proposed. DPCP is designed to find a subspace containing whole inliers but as few outliers as possible. 
As another line of robust PCA research, a unified framework for robustifying PCA-like algorithms was proposed in~\citep{Yang2015}. Also, a low rank and sparse matrix decomposition method has been  utilized for robust PCA~\citep{Xu2010,Candes2011}. We note that robust PCA has a long history and the related literature on this subject is vast. For many other related works, we refer to a recent review~\citep{DBLP:journals/pieee/LermanM18} for an example. 

The principal component vector in cPCA is the direction in which the variance of the projected data is maximized. Variance is sensitive to outliers and various projection-pursuit (PP) PCA methods with other robust dispersion measures are proposed~\citep{Li1985,Croux2005,Croux2007}. Our proposed method is categorized in this approach. Our method is based on mode estimation~\citep{Parzen1962}. Mode estimation has a long history and its robustness to outliers has been investigated in the literature. In this study, we develop a {\it{modal PCA (MPCA)}} algorithm, which is a robust PCA algorithm based on mode estimation. 

There is a long history of research on mode statistics~\citep{10.2307/2343439}, and the estimation of mode and its related topics are still being actively studied. For example, in the field of pattern recognition, the mean shift algorithm~\citep{DBLP:journals/tit/FukunagaH75} is one of the most popular methods for density-based clustering, which is nothing but the multiple mode estimation method~\citep{400568}. Regression towards mode~\citep{LEE1989337,KEMP201292} has been attracting the interest of many statistician and is an active area of research~\citep{Yao2014,ota2019,Sando2019InformationGO}. A recent comprehensive review on the use of mode has been provided by~\citet{doi:10.1111/insr.12340}, and our work provides a novel example of the use of mode for PCA.

It is worth noting that some of the above mentioned conventional robust PCA methods give theoretical analyses such as subspace recovery. However, there is little research from the viewpoint of robust statistics, such as influence function and breakdown point. The proposed MPCA has several desirable theoretical properties. The major contributions of our work are highlighted as follows.
\begin{itemize}
    \item We prove that the objective function of the proposed MPCA converges uniformly in probability to the ground-truth probability density function (PDF) under standard regularity conditions. We also provide its convergence rate.
    \item In robust statistics, the influence function is often used for analyzing the robustness of estimators. It allows us to quantify the effect of an outlier on the estimate. We derive the influence function of the proposed MPCA, and show that the influence of an outlier on the proposed MPCA method is smaller than its influence on the cPCA method. 
    \item We introduce a finite-sample breakdown point suitable for the principal component estimator and derive a lower bound of the breakdown point (LBBP) of the proposed MPCA. Roughly speaking, the finite sample breakdown point quantifies the number of outliers that an estimator can tolerate, and its lower bound provides a worst-case evaluation for allowable contamination. 
\end{itemize}

The rest of this paper is organized as follows. After introducing notations, Section~\ref{sec:proposed_approach} proposes the minor component estimator based on mode estimation. Theoretical properties of the proposed MPCA are derived in Section~\ref{sec:theoretical_properties}. We present an optimization algorithm for the proposed method in Section~\ref{sec:algorithm} and the experimental results in Section~\ref{sec:experiments}. The last section is devoted to drawing conclusions.

\section{Notation and Proposed Approach}
\label{sec:proposed_approach}

Let $\bm{X}_i = \left( X_{i1}\ \dots\ X_{id} \right)^{\top}$ be a random vector corresponding to the $i$-th observation. When each observation is i.i.d., we use the notation $\bm{X}=\left( X_1\ \dots\ X_d \right)^{\top}$ as a random vector and express its PDF as $f_{\bm{X}}:\mathbb{R}^d\to\mathbb{R}$. The notation $\bm{x}_i = \left( x_{i1}\ \dots \ x_{id} \right)^{\top}$ refers to the $i$-th realization of $\bm{X}_i$. By projecting a random vector $\bm{X}$ on the direction $\bm{v} = \left( v_1\ \dots \ v_d \right)^{\top}$, we obtain a random variable $\bm{v}^{\top}\bm{X} = \sum_{j=1}^{d} v_j X_j$. Its PDF is represented as $f_{\bm{v}^{\top}\bm{X}}(\; \cdot \;)$. A set of unit vectors is denoted by $\mathcal{S}^{d-1} = \left\{ \bm{v} \in\mathbb{R}^d \mid \bm{v}^{\top}\bm{v} = 1 \right\}$.

In this study, the first minor component~($\text{MC}_1$) is defined as the direction in which the scatter of data is minimized. The second minor component ${\text{MC}_2}$ is the direction orthogonal to ${\text{MC}_1}$ on which the scatter of data is minimized, and other $\text{MC}_{k}, k=3,4\dots$ are defined likewise. The principal components are defined as bases of subspace orthogonal to those spanned by minor components.

Our proposed method is based on the assumption that the projected data on the true $\text{MC}$ direction tend to concentrate at a single point. cPCA measures the degree of concentration by means of the sample variance, and hence is sensitive to outliers. Instead of the sample variance, we use the probability density value of the mode as a measure of concentration and regard the direction that maximizes the probability density of the mode as the $\text{MC}$ direction.

\begin{defi}[$\text{MC}_{k}$ estimate in MPCA]
  \label{defi:optim}
  In MPCA, $\text{MC}_{k}$ estimate $\bm{v}_k$ is defined as a solution of the following optimization problem:
 \begin{align}
\notag
    &(\hat{m}_{k},\hat{\bm{v}}_{k})
      =  \argmax_{m\in\mathbb{R},\ \bm{v}\in \mathcal{S}^{d-1} }\quad \frac{1}{N}\sum_{i=1}^{N}\phi_h\left( m-\bm{v}^{\top}\bm{x}_{i} \right)
  ,\\
  \label{eq:optim1}  
    &
    \quad \text{s.t.}\quad
        \bm{v}^{\top}\hat{\bm{v}}_{j}=0, \quad j=1\dots k-1
      .
  \end{align}
  where $\phi_h(z)$ denotes a kernel function with a bandwidth parameter $h$ and $\phi_h(z)=\phi\left( z/h \right)/h$.
\end{defi}
In this study, we use $\phi(z)=\exp\left(-z^2/2\right)/\sqrt{2\pi}$. We represent $\hat{m}_k$ as the estimate of mode of the projected variable in the direction $\text{MC}_k$.
Note that in Eq.~\eqref{eq:optim1}, replacing $\phi_h$ with the negative squared loss after setting $m=0$ results in the definition of the principal component of cPCA.

\section{Theoretical Properties}
\label{sec:theoretical_properties}
In this section, we provide three theoretical results about minor components $\bm{v}_k$. The first result concerns the relationship between the objective function of MPCA and the value of the ground-truth density function~(Theorem~\ref{theo:convergence}). Second, we derive an influence function in Theorem~\ref{theo:IF}. 
We then discuss the breakdown point and present its computable lower bound in Theorem~\ref{theo:inequality}. All of the proofs for theorems are shown in the Appendix sections for the sake of readability.

\subsection{Convergence of the Mode Estimator}
\label{sec:conv}
In this subsection, we show the convergence property of the objective function of MPCA. The following theorem ensures that under some standard assumption, the minor component and the mode on that axis obtained by MPCA coincides with those of the true probability density function.
\begin{theo}[Uniform stochastic convergence]
  \label{theo:convergence}
  Let the observed data be an i.i.d. sample from a distribution with bounded variance and finite mode. The projected PDF $f_{\bm{v}^{\top}\bm{X}}(u)$ for any direction $\bm{v}$ is assumed to be bounded and differentiable with respect to $u$. Kernel function $\phi(u)$ for mode estimation, its derivative, and first- and second-order moments are assumed to exist and be finite. The bandwidth of the kernel function decays at a certain rate in $n$\footnote{The details of the regularity condition are shown in the Appendix.}. Then, we have
  \begin{align*}
    &\sup_{(m,\bm{v})\in M\times\mathcal{S}^{d-1}} \left\lvert \frac{1}{n}\sum_{i=1}^{n}\phi_h(m-\bm{v}^{\top}\bm{X}_{i}) - f_{\bm{v}^{\top}\bm{X}}(m) \right\rvert = o_p(1)
  .
  \end{align*}
\end{theo}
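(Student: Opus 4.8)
The plan is to prove this as a uniform law of large numbers (ULLN) over the compact parameter set $M \times \mathcal{S}^{d-1}$, combined with a bias analysis for the kernel smoothing. I would split the quantity inside the supremum into a stochastic (variance) part and a deterministic (bias) part:
\begin{align*}
  \frac{1}{n}\sum_{i=1}^{n}\phi_h(m-\bm{v}^{\top}\bm{X}_i) - f_{\bm{v}^{\top}\bm{X}}(m)
  = \underbrace{\left(\frac{1}{n}\sum_{i=1}^{n}\phi_h(m-\bm{v}^{\top}\bm{X}_i) - \E\left[\phi_h(m-\bm{v}^{\top}\bm{X})\right]\right)}_{\text{(I): stochastic term}}
  + \underbrace{\left(\E\left[\phi_h(m-\bm{v}^{\top}\bm{X})\right] - f_{\bm{v}^{\top}\bm{X}}(m)\right)}_{\text{(II): bias term}}.
\end{align*}
It suffices to show both terms go to zero uniformly in $(m,\bm{v})$: term (I) in probability, term (II) deterministically.

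For the bias term (II), I would write $\E[\phi_h(m-\bm{v}^{\top}\bm{X})] = \int \phi_h(m-u) f_{\bm{v}^{\top}\bm{X}}(u)\,\dd u = \int \phi(t) f_{\bm{v}^{\top}\bm{X}}(m-ht)\,\dd t$ by the change of variables $t=(m-u)/h$. Since $\phi$ integrates to $1$, the bias equals $\int \phi(t)\big(f_{\bm{v}^{\top}\bm{X}}(m-ht) - f_{\bm{v}^{\top}\bm{X}}(m)\big)\,\dd t$. Using differentiability of $f_{\bm{v}^{\top}\bm{X}}$ in its argument together with the finiteness of the first moment of $\phi$, this is $O(h)$ uniformly, provided the derivative is uniformly bounded over directions — here I would lean on the stated boundedness/differentiability assumption and, if needed, a dominated-convergence argument to make the bound uniform in $\bm{v}$. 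Since $h\to 0$ as $n\to\infty$ (the decay-rate regularity condition), term (II) vanishes uniformly.

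For the stochastic term (I), the standard route is a bracketing or covering-number argument for the function class $\mathcal{G}_n = \{(m,\bm{v},\bm{x}) \mapsto \phi_h(m-\bm{v}^{\top}\bm{x}) : (m,\bm{v})\in M\times\mathcal{S}^{d-1}\}$. The Gaussian kernel $\phi$ is bounded, Lipschitz, and of bounded variation, and the map $(m,\bm{v})\mapsto m-\bm{v}^{\top}\bm{x}$ is Lipschitz in $(m,\bm{v})$ on the compact set $M\times\mathcal{S}^{d-1}$; composition of a bounded-variation function with such a map keeps the class in a VC-type or finite-uniform-entropy class, so its covering numbers are polynomial. The only subtlety is that the envelope and Lipschitz constant of $\phi_h$ blow up like $1/h$ and $1/h^2$ respectively as $h\to 0$; one controls this by the rate condition on $h$ (this is exactly why the footnoted regularity condition specifies $nh^{?}\to\infty$-type behavior), so that $\sup_{\mathcal{G}_n}|\text{empirical} - \text{mean}| = o_p(1)$ via a maximal-inequality argument (e.g., a symmetrization + chaining bound, or a direct Hoeffding-union bound over an $\varepsilon$-net of $M\times\mathcal{S}^{d-1}$ whose cardinality grows polynomially, combined with the Lipschitz modulus to control the oscillation between net points). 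The bounded-variance assumption on the data enters to ensure the projected variable $\bm{v}^{\top}\bm{X}$ has a well-behaved tail uniformly in $\bm{v}$, so the net argument doesn't leak mass at infinity.

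The main obstacle I anticipate is making the stochastic-term bound genuinely uniform over $\bm{v}\in\mathcal{S}^{d-1}$ while the kernel bandwidth shrinks: the $1/h$ envelope and $1/h^2$ Lipschitz constant must be balanced against the $\varepsilon$-net cardinality and the sample size, so the argument is only valid under the precise decay rate for $h$ relegated to the appendix. A secondary technical point is interchanging supremum and expectation cleanly in the bias term when the density's derivative bound is only assumed direction-wise; I would handle this with compactness of $\mathcal{S}^{d-1}$ and a continuity argument. Once both pieces are in place, combining (I) and (II) by the triangle inequality gives the stated $o_p(1)$ uniform convergence, and the convergence-rate claim in the theorem follows by tracking the explicit $O(h)$ bias rate against the stochastic rate $O_p(\sqrt{\log n / (n h^{\,?})})$ from the maximal inequality.
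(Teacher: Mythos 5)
Your proposal is correct and follows essentially the same route as the paper: the paper's Appendix~A decomposes the error into exactly your bias term (handled by the change of variables $z=(m-y)/h_n$, the mean-value theorem, and the uniform derivative bound, giving $O(h_n)$) and your stochastic term, which it controls precisely by the ``direct union bound over an $\varepsilon$-net'' variant you describe --- a net of $M\times\mathcal{S}^{d-1}$ of polynomial cardinality $Ln^{2(d+1)}$ and mesh $n^{-2}$, Bernstein's inequality at each net point with variance bound $c_2L_0/h_n$, and the $c_1/h_n^2$ Lipschitz modulus of $\phi_{h_n}$ (balanced against the first-moment condition $\E\|\bm{X}\|_2<\infty$ and the rate $nh_n/\log n\to\infty$) to control the oscillation off the net. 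The only cosmetic differences are that the paper uses Bernstein rather than Hoeffding and assumes the density-derivative bound uniformly over directions from the outset, so no separate compactness argument is needed for the bias term.
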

Note that this theorem is a text-book example of the kernel density estimate if we consider only the supremum with respect to mode $m$. This is not the case in our problem, because we consider the supremum with respect to both $m$ and $\bm{v}$, which requires more involved treatment and results in this novel theorem.
To observe the relationship between MPCA and cPCA based on this theorem, we consider the situation in which observations follow a normal distribution~$\mathcal{N}(\bm{\mu}, \bm{\Sigma})$. The projected PDF is $f_{\bm{v}^{\top}\bm{X}}(m)=\exp\left(-\frac{(m-\bm{\mu}^{\top}\bm{v})^2}{2\bm{v}^{\top}\bm{\Sigma} \bm{v}}\right)/\sqrt{2\pi \bm{v}^{\top}\bm{\Sigma} \bm{v}}$ and the value $m=\bm{\mu}^{\top}\bm{v}$ maximizes $f_{\bm{v}^{\top}\bm{X}}(m), ^{\forall}\bm{v}\in\mathcal{S}^{d-1}$. Because $f_{\bm{v}^{\top}\bm{X}}(\bm{\mu}^{\top}\bm{v})=1/\sqrt{2\pi \bm{v}^{\top}\bm{\Sigma} \bm{v}}$, the optimization problem $\max f_{\bm{v}^{\top}\bm{X}}(\bm{\mu}^{\top}\bm{v})$ is equivalent to the problem $\min \bm{v}^{\top}\bm{\Sigma} \bm{v}$. This implies that the optimization problem~\eqref{eq:optim1} results in a cPCA problem for observations that follow a normal distribution.

We then provide the convergence rate of the objective function. 
\begin{theo}[Convergence rate]
  \label{theo:convergencerate}
  In addition to the conditions in Theorem~\ref{theo:convergence}, we assume $h_n = \mathcal{O}( n^{-\frac{1}{k}} )$, where $k>4$. Then, we have
  \begin{align*}
    &\sup_{(m,\bm{v})\in M\times\mathcal{S}^{d-1}} \left\lvert \frac{1}{n}\sum_{i=1}^{n}\phi_h(m-\bm{v}^{\top}\bm{X}_{i}) - f_{\bm{v}^{\top}\bm{X}}(m)\right\rvert = \mathcal{O}_{\text{a.co.}}\left( n^{-\frac{1}{k}} \right)
  .
  \end{align*}
\end{theo}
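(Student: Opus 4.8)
The plan is to run the standard bias--stochastic decomposition behind uniform consistency of kernel density estimators, but carried out uniformly over the extra parameter $\bm{v}\in\mathcal{S}^{d-1}$. For $(m,\bm{v})\in M\times\mathcal{S}^{d-1}$ write
\begin{align*}
  \frac{1}{n}\sum_{i=1}^{n}\phi_h(m-\bm{v}^{\top}\bm{X}_i)-f_{\bm{v}^{\top}\bm{X}}(m)
  = B_n(m,\bm{v}) + S_n(m,\bm{v}),
\end{align*}
where $B_n(m,\bm{v})=\E[\phi_h(m-\bm{v}^{\top}\bm{X})]-f_{\bm{v}^{\top}\bm{X}}(m)$ is deterministic and $S_n(m,\bm{v})=\frac{1}{n}\sum_{i=1}^{n}\bigl(\phi_h(m-\bm{v}^{\top}\bm{X}_i)-\E[\phi_h(m-\bm{v}^{\top}\bm{X}_i)]\bigr)$ is the centered empirical part. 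For the bias, the substitution $t=(m-u)/h$ gives $B_n(m,\bm{v})=\int\phi(t)\bigl(f_{\bm{v}^{\top}\bm{X}}(m-ht)-f_{\bm{v}^{\top}\bm{X}}(m)\bigr)\,\dd t$, and the mean value theorem together with the uniform-in-$\bm{v}$ bound on $f'_{\bm{v}^{\top}\bm{X}}$ and the finiteness of $\int|t|\phi(t)\,\dd t$ from the regularity conditions yields $\sup_{M\times\mathcal{S}^{d-1}}|B_n|\le Ch_n=\mathcal{O}(n^{-1/k})$, which is $\mathcal{O}_{\text{a.co.}}(n^{-1/k})$ trivially since $B_n$ is nonrandom.

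It remains to prove $\sup_{M\times\mathcal{S}^{d-1}}|S_n(m,\bm{v})|=\mathcal{O}_{\text{a.co.}}(n^{-1/k})$, i.e.\ to exhibit $\epsilon_0>0$ with $\sum_n\mathbb{P}\bigl(\sup_{M\times\mathcal{S}^{d-1}}|S_n|>\epsilon_0 n^{-1/k}\bigr)<\infty$. Since $M\times\mathcal{S}^{d-1}$ is compact, I would cover it by a $\delta_n$-net $\{(m_\ell,\bm{v}_\ell)\}_{\ell\le L_n}$ with $L_n\le C\delta_n^{-d}$. At a fixed net point the summands $\phi_h(m_\ell-\bm{v}_\ell^{\top}\bm{X}_i)$ are i.i.d., bounded by $\|\phi\|_\infty/h_n$ and of variance $\mathcal{O}(1/h_n)$ (because $f_{\bm{v}^{\top}\bm{X}}$ is uniformly bounded and $\int\phi^2<\infty$), so Bernstein's inequality gives $\mathbb{P}(|S_n(m_\ell,\bm{v}_\ell)|>\eta_n)\le 2\exp(-c\,n h_n\eta_n^2)$ once $\eta_n$ is small, the linear Bernstein term being negligible there. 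For the oscillation of $S_n$ inside one cell I would use that $\phi_h(m-\bm{v}^{\top}\bm{x})$ is Lipschitz in $(m,\bm{v})$: its $m$-derivative is bounded by $\|\phi'\|_\infty/h_n^2$ uniformly in $\bm{x}$, while its $\bm{v}$-derivative is bounded by a constant times $\|\bm{x}\|/h_n^2$. The factor $\|\bm{x}\|$ is the only non-uniform quantity; I would dispose of it by truncating the $\bm{X}_i$ at a slowly growing level $\tau_n$, using the bounded-variance assumption and Markov's inequality to make $\sum_n n\,\mathbb{P}(\|\bm{X}\|>\tau_n)<\infty$ so that the truncation is a.co.-negligible (an $L^1(P)$-bracketing bound needing only $\E\|\bm{X}\|<\infty$ would be an alternative). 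Taking $\delta_n$ polynomially small in $n$ (of order $n^{-1/k}h_n^2/\tau_n$) makes the within-cell oscillation of the truncated $S_n$ at most $\tfrac12\epsilon_0 n^{-1/k}$ while keeping $\log L_n=\mathcal{O}(\log n)$.

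A union bound over the net then gives, for $n$ large,
\begin{align*}
  \mathbb{P}\!\left(\sup_{M\times\mathcal{S}^{d-1}}|S_n|>\epsilon_0 n^{-1/k}\right)
  &\le L_n\cdot 2\exp\!\left(-c\,n h_n\bigl(\tfrac12\epsilon_0 n^{-1/k}\bigr)^{2}\right) + n\,\mathbb{P}(\|\bm{X}\|>\tau_n)\\
  &\le 2\exp\!\left(C\log n - c'\,n^{1-3/k}\right) + n\,\mathbb{P}(\|\bm{X}\|>\tau_n),
\end{align*}
and both terms are summable in $n$: the first because $n h_n(n^{-1/k})^{2}\asymp n^{1-3/k}$ with $1-3/k>\tfrac14$ when $k>4$, so $n^{1-3/k}$ dominates $\log n$; the second by the choice of $\tau_n$. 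Hence $\sup|S_n|=\mathcal{O}_{\text{a.co.}}(n^{-1/k})$ --- indeed $o$ of it up to logarithmic factors, so the stated rate is driven by the bias --- and adding the two estimates proves the theorem. The step I expect to be the main obstacle is this last, genuinely uniform, one: producing an exponential deviation inequality valid simultaneously over the continuum of directions $\bm{v}$ when each kernel evaluation is of unbounded order $h_n^{-1}$ and the $\bm{v}$-Lipschitz constant carries the unbounded factor $\|\bm{X}\|$; calibrating the truncation level $\tau_n$, the net scale $\delta_n$, the deviation level $n^{-1/k}$ and the bandwidth against one another --- and checking that $k>4$ leaves slack in the exponent --- is the delicate part.
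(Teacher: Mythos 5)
Your proposal is correct, and its overall architecture---compactness of $M\times\mathcal{S}^{d-1}$, a polynomially sized net, an exponential inequality at the net points, Lipschitz control of the within-cell oscillation, and a first-order bias bound of order $h_n$---is the same as the paper's. Two technical choices differ, and both of yours are legitimate. First, at the net points you invoke Bernstein's inequality with the variance bound $\mathrm{Var}[\phi_{h_n}(m-\bm{v}^{\top}\bm{X})]=\mathcal{O}(h_n^{-1})$, which yields an exponent of order $n h_n\, n^{-2/k}\asymp n^{1-3/k}$; the paper instead applies Hoeffding's inequality using only the range bound $c_0/h_n$, obtaining the weaker exponent $n^{1-4/k}$. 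Both beat the $\mathcal{O}(\log n)$ contribution of the covering number when $k>4$, so both close; your route would in fact tolerate $k>3$ at this step, which shows the paper's restriction $k>4$ is dictated by its choice of Hoeffding rather than by anything intrinsic. Second, the Lipschitz constant in $\bm{v}$ carries the unbounded factor $\|\bm{X}_i\|_2$; you neutralize it by truncating at a polynomially growing level $\tau_n$ and showing the complementary event is a.co.-negligible, whereas the paper avoids truncation by taking an extremely fine net of mesh $n^{-2}$, so that the empirical oscillation term is bounded by $(nh_n)^{-2}c_1\sqrt{2}\,\bigl[1+\tfrac{1}{n}\sum_{i}\|\bm{X}_i\|_2\bigr]$, and then disposes of the random average by Markov's inequality at level $\asymp h_n$, giving a tail of order $n^{-2+3/k}$ that is summable for $k>4$. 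The paper's device needs only $\E[\|\bm{X}\|_2]<\infty$ and no truncation bookkeeping; yours keeps the net at its natural scale and makes the dependence on the moment assumption explicit. The step you flag as the main obstacle is indeed where the paper spends its effort, and your calibration of $\tau_n$, $\delta_n$, and the deviation level $n^{-1/k}$ against $h_n$ resolves it.
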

The notation ``a.co.'' can be found in \citep{Rao1983,Shi2009}. In this paper, the notation $Z_n = O_{\text{a.co.}}\left( g(n) \right)$ denotes that for a sequence of random variables $(Z_n)_{n\in\mathbb{N}}$, there exists $M>0$ such that $\sum_{n\in\mathbb{N}} P(\left\{ \left| Z_n \right| > M g(n) \right\}) < \infty,$ where $\lim_{n\to +\infty} g(n) = 0$.

\subsection{Influence Function}
\label{sec:influencefunction}
An influence function quantifies the dependence of the estimator on a single observation. Let us define an estimator $T$ as a functional from a set of probability measures to $d$-dimensional Euclidean space. Under the assumption that observations follow the probability measure $F$, the influence function of $T$ with $F$, $\text{IF}(u;T,F)$, is defined as
\begin{align}
  \text{IF}(u;T,F)
    &= \lim_{\epsilon\to 0}\frac{T\left((1-\epsilon)F+\epsilon \Delta_{u}\right) - T\left(F\right)}{\epsilon}
,
\end{align}
where $\Delta_{u}$ denotes the Dirac measure.

For theoretical treatment, we reformulate the optimization problem~\eqref{eq:optim1} with the functional $F$ as follows:
\begin{align}
\label{eq:functional}
    &(\hat{m}_{k}, \hat{\bm{v}}_{k})
      =  \argmax_{m\in\mathbb{R},\ \bm{v}\in \mathcal{S}^{d-1}} \; \int \phi_h(m-\bm{v}^{\top}\bm{x}) dF(\bm{x})
    \\ 
    \notag
    &
    \quad \text{s.t.}\quad
        \bm{v}^{\top}\hat{\bm{v}}_{l}=0, \quad l=1\dots k-1.
\end{align}
This is an optimization problem with respect to $m$ and $\bm{v}$, and it is difficult to derive an influence function of both parameters. In order to discuss the influence function of the minor component, we assume that
the mode of the true probability measure $F$ is $\bm{0}$ without loss of generality. This assumption transforms the problem~\eqref{eq:functional} to the following problem:
\begin{align}
\label{eq:functional2}
    &\hat{\bm{w}}_{k}
      =  \argmax_{\bm{w}\in \mathcal{S}^{d-1}} \quad \int \phi_h(\bm{w}^{\top}\bm{y})dF_{\bm{Y}}(\bm{y})
  \\
  \notag
  &
    \quad
    \text{s.t.}\quad 
         \bm{w}^{\top}\hat{\bm{w}}_{l}=0, \quad l=1\dots k-1.
\end{align}
Then, the influence function of the estimate $\hat{\bm{w}}_k$ is given as follows:
\begin{theo}
  \label{theo:IF}
 Let the ground-truth probability measure be $F$. The influence function $\text{IF}(\bm{u};\hat{\bm{w}}_{k},F)$ of the estimate $\hat{\bm{w}}_k$ of $\text{MC}_{k}$, obtained by solving problem~\eqref{eq:functional2}, for the outlier $\bm{u}\in\mathbb{R}^{d}$ is given as follows:
  \begin{align*}
    \text{IF}(\bm{u};\hat{\bm{w}}_k, F)
      &=  \left( \bm{A}_k \bm{B}_k - \bm{C}_k \right)^{-1}
      \times \left[ \bm{A}_k \bm{d}_k + \sum_{l=1}^{k-1} \bm{C}_l \text{IF}(\bm{u};\hat{\bm{w}}_l, F) \right], \; (k \geq 2),\\
    \text{IF}(\bm{u};\hat{\bm{w}}_1, F)
      &=  \left( \bm{A}_1 \bm{B}_1 - \bm{C}_1 \right)^{-1} \bm{A}_1 \bm{d}_1,
  \end{align*}
  where
\begin{align*}
    \begin{aligned}
      &
      \bm{A}_k = \bm{I} - \sum_{l=1}^{k}\hat{\bm{w}}_{l}(F) \hat{\bm{w}}_{l}(F)^{\top}
    ,\\
      &
     \;
      \bm{B}_k = h^{-3} \int \left. \frac{d^2 \phi(z)}{dz^2} \right|_{z=\frac{\hat{\bm{w}}_{k}(F)^{\top}\bm{x} }{h}} \bm{x}\bm{x}^{\top} dF(\bm{x})
    ,\\
      &
      \bm{C}_k = \hat{\bm{w}}_{k}(F)^{\top}\bm{\psi}(\hat{\bm{w}}_k(F),F) \bm{I} + \hat{\bm{w}}_{k}(F) \bm{\psi}(\hat{\bm{w}}_k(F),F)^{\top}
    ,\\
      &
      \bm{d}_k = \bm{\psi}(\hat{\bm{w}}_k(F),F) - \left. h^{-2}\frac{d\phi(z)}{dz} \right|_{z=\frac{\hat{\bm{w}}_k(F)^{\top}\bm{u}}{h}} \bm{u}
    ,\\
      &
     \quad 
      \bm{\psi}(\bm{w},F) = h^{-2} \int \left. \frac{d\phi(z)}{dz} \right|_{z=\frac{\bm{w}^{\top}\bm{x} }{h}} \bm{x} dF(\bm{x})
    .
    \end{aligned}
  \end{align*}
\end{theo}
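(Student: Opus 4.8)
The plan is to characterize $\hat{\bm{w}}_k(F)$ implicitly through its first‑order optimality conditions, differentiate those conditions along the contamination path $F_\epsilon=(1-\epsilon)F+\epsilon\Delta_{\bm{u}}$, and solve the resulting linear system for $\text{IF}(\bm{u};\hat{\bm{w}}_k,F)=\tfrac{d}{d\epsilon}\hat{\bm{w}}_k(F_\epsilon)\big|_{\epsilon=0}$. First I would form the Lagrangian of problem~\eqref{eq:functional2} for the sphere constraint $\bm{w}^\top\bm{w}=1$ and the orthogonality constraints $\bm{w}^\top\hat{\bm{w}}_l=0$, $l=1,\dots,k-1$. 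Writing $\bm{\psi}(\bm{w},F)=\nabla_{\bm{w}}\int\phi_h(\bm{w}^\top\bm{y})\,dF(\bm{y})=h^{-2}\int\phi'(\bm{w}^\top\bm{x}/h)\,\bm{x}\,dF(\bm{x})$ for the gradient of the objective (this is exactly the $\bm{\psi}$ of the statement), stationarity says $\bm{\psi}(\hat{\bm{w}}_k,F)$ lies in $\mathrm{span}\{\hat{\bm{w}}_1,\dots,\hat{\bm{w}}_k\}$; projecting onto its orthogonal complement gives the estimating equation
\[
\bm{A}_k(F)\,\bm{\psi}(\hat{\bm{w}}_k(F),F)=\bm{0},\qquad \|\hat{\bm{w}}_k(F)\|=1,\qquad \hat{\bm{w}}_k(F)^\top\hat{\bm{w}}_l(F)=0\ \ (l<k),
\]
with $\bm{A}_k$ as in the statement, and the sphere multiplier equals $\lambda_k=\hat{\bm{w}}_k(F)^\top\bm{\psi}(\hat{\bm{w}}_k(F),F)$.

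Next I would substitute $F_\epsilon$. Since $\bm{\psi}(\bm{w},F_\epsilon)=(1-\epsilon)\bm{\psi}(\bm{w},F)+\epsilon\,h^{-2}\phi'(\bm{w}^\top\bm{u}/h)\bm{u}$ is affine in $\epsilon$, and using the implicit function theorem to obtain a differentiable curve $\epsilon\mapsto\hat{\bm{w}}_l(F_\epsilon)$ near $0$, I would differentiate $\bm{A}_k(F_\epsilon)\bm{\psi}(\hat{\bm{w}}_k(F_\epsilon),F_\epsilon)=\bm{0}$ at $\epsilon=0$ by the product and chain rules. Three pieces appear: $\partial_{\bm{w}}\bm{\psi}(\bm{w},F)\big|_{\hat{\bm{w}}_k}=h^{-3}\int\phi''(\hat{\bm{w}}_k^\top\bm{x}/h)\,\bm{x}\bm{x}^\top dF(\bm{x})=\bm{B}_k$; $\partial_\epsilon\bm{\psi}(\hat{\bm{w}}_k,F_\epsilon)\big|_0=h^{-2}\phi'(\hat{\bm{w}}_k^\top\bm{u}/h)\bm{u}-\bm{\psi}(\hat{\bm{w}}_k,F)=-\bm{d}_k$; and $\partial_\epsilon\bm{A}_k(F_\epsilon)\big|_0=-\sum_{l=1}^k\big(\text{IF}(\bm{u};\hat{\bm{w}}_l,F)\hat{\bm{w}}_l^\top+\hat{\bm{w}}_l\,\text{IF}(\bm{u};\hat{\bm{w}}_l,F)^\top\big)$. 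Applying the last to $\bm{\psi}(\hat{\bm{w}}_k,F)$, the $l=k$ term collapses, via $\hat{\bm{w}}_k^\top\bm{\psi}(\hat{\bm{w}}_k,F)=\lambda_k$, to exactly $-\bm{C}_k\,\text{IF}(\bm{u};\hat{\bm{w}}_k,F)$.

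Finally, collecting the coefficient of $\text{IF}(\bm{u};\hat{\bm{w}}_k,F)$ yields $(\bm{A}_k\bm{B}_k-\bm{C}_k)\,\text{IF}(\bm{u};\hat{\bm{w}}_k,F)$; the inhomogeneous part is $\bm{A}_k\bm{d}_k$; and the $l<k$ pieces of $\partial_\epsilon\bm{A}_k$, after rewriting the inner products $\hat{\bm{w}}_l^\top\bm{\psi}(\hat{\bm{w}}_k,F)$ using the stationarity (Lagrange‑multiplier) relations together with the nested estimating equations for $\hat{\bm{w}}_1,\dots,\hat{\bm{w}}_{k-1}$, assemble into $\sum_{l=1}^{k-1}\bm{C}_l\,\text{IF}(\bm{u};\hat{\bm{w}}_l,F)$. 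Inverting $\bm{A}_k\bm{B}_k-\bm{C}_k$ (nonsingular under the strict second‑order optimality/non‑degeneracy assumptions; note $\text{IF}(\bm{u};\hat{\bm{w}}_k,F)\perp\hat{\bm{w}}_k$ since $\|\hat{\bm{w}}_k(F_\epsilon)\|\equiv 1$) gives the claimed recursion, and for $k=1$ the orthogonality constraints are absent, the sum is empty, and one recovers $\text{IF}(\bm{u};\hat{\bm{w}}_1,F)=(\bm{A}_1\bm{B}_1-\bm{C}_1)^{-1}\bm{A}_1\bm{d}_1$.

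I expect the main obstacle to be the correct handling of the $\epsilon$‑dependence of the projector $\bm{A}_k(F_\epsilon)$: it is precisely what makes the influence function of $\text{MC}_k$ depend recursively on those of $\text{MC}_1,\dots,\text{MC}_{k-1}$, and extracting exactly $\bm{C}_l$ as the coefficient of $\text{IF}(\bm{u};\hat{\bm{w}}_l,F)$ requires careful bookkeeping of the Lagrange multipliers. The remaining technical points — differentiation under the integral sign (covered by the moment and derivative conditions on $\phi$), the implicit‑function‑theorem step giving a well‑defined differentiable contamination curve, and invertibility of $\bm{A}_k\bm{B}_k-\bm{C}_k$ — are routine by comparison.
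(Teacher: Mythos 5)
Your proposal follows essentially the same route as the paper's proof: form the Lagrangian of problem~\eqref{eq:functional2}, reduce stationarity to the estimating equation $\bm{A}_k\bm{\psi}(\hat{\bm{w}}_k(F),F)=\bm{0}$, differentiate it along the contamination path $(1-\epsilon)F+\epsilon\Delta_{\bm{u}}$, and identify $\bm{B}_k$, $-\bm{d}_k$, and the derivative of the projector before solving the resulting linear system. The identifications of each term (including the collapse of the $l=k$ projector term to $-\bm{C}_k\,\text{IF}(\bm{u};\hat{\bm{w}}_k,F)$ via the multiplier $\hat{\bm{w}}_k^{\top}\bm{\psi}$) match the paper's computation, and the bookkeeping of the $l<k$ terms that you flag as the delicate step is treated at the same (terse) level of detail in the paper itself.
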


We show a simple example of the influence functions of cPCA and MPCA. Figure~\ref{fig:IF} shows that the norm $\left\| \text{IF}\left(\bm{u};\hat{\bm{v}}_{c,1},\mathcal{N}(0,\text{diag}(2,1))\right) \right\|_{2}$ of the influence functions for the estimate $\hat{\bm{v}}_{c,1}$ of $\text{MC}_1$ obtained using cPCA~(Fig.~\ref{fig:cPCA_IF}) and the norm $\left\| \text{IF}\left(\bm{u};\hat{\bm{w}}_{1}, \mathcal{N}(0,\text{diag}(2,1)) \right) \right\|_{2}$ of the influence function for the estimate $\hat{\bm{w}}_{1}$ of $\text{MC}_1$ obtained using MPCA~(Fig.~\ref{fig:ModalPCA_IF}). The probability measure $F_{\bm{Y}}$ is set to that of the normal distribution $\mathcal{N}\left(0,\text{diag}(2,1)\right)$ and the bandwidth $h$ for MPCA is set to one. The axes represent the outlier $\bm{u}=(u_1\ u_2)^{\top}$. The larger the norm of the influence function, the larger the effect of the outlier on the estimate. The solid arrows in Fig.~\ref{fig:IF} show the $\text{MC}_2$ direction and the dotted arrows show the $\text{MC}_1$ direction. Figure~\ref{fig:cPCA_IF} shows that cPCA is highly sensitive to an outlier at any point. On the other hand, in Fig.~\ref{fig:ModalPCA_IF}, the region of outliers that have a large effect on the estimate obtained using MPCA is considerably narrower than that of cPCA. The influence function of cPCA is discussed in~\citep{Croux2005,Frank1985} in detail.

\begin{figure}[ht]
  \centering
  \subfigure[cPCA]{
    \includegraphics[width=.45\textwidth]{./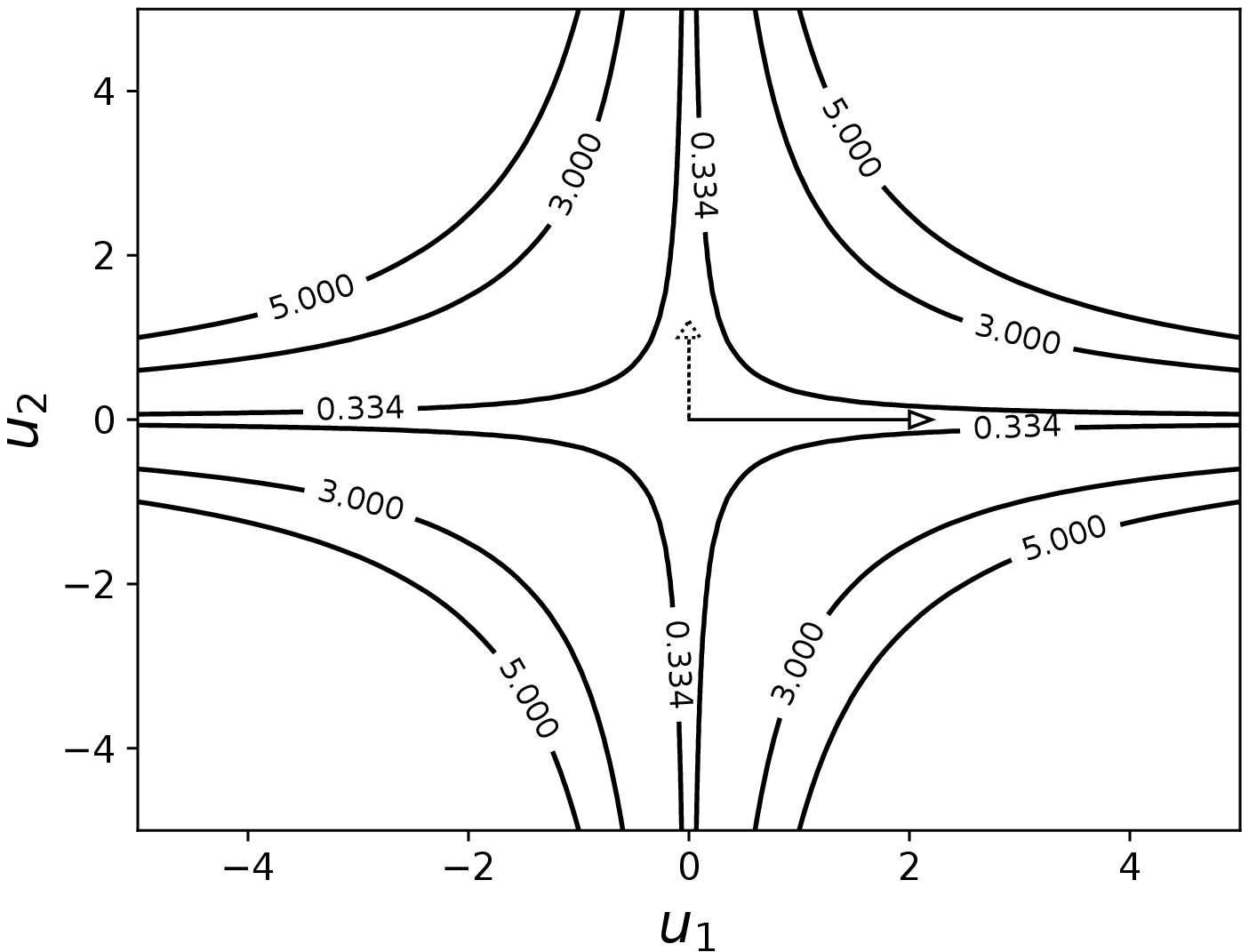}
    \label{fig:cPCA_IF}
  }
  \subfigure[MPCA]{
    \includegraphics[width=.45\textwidth]{./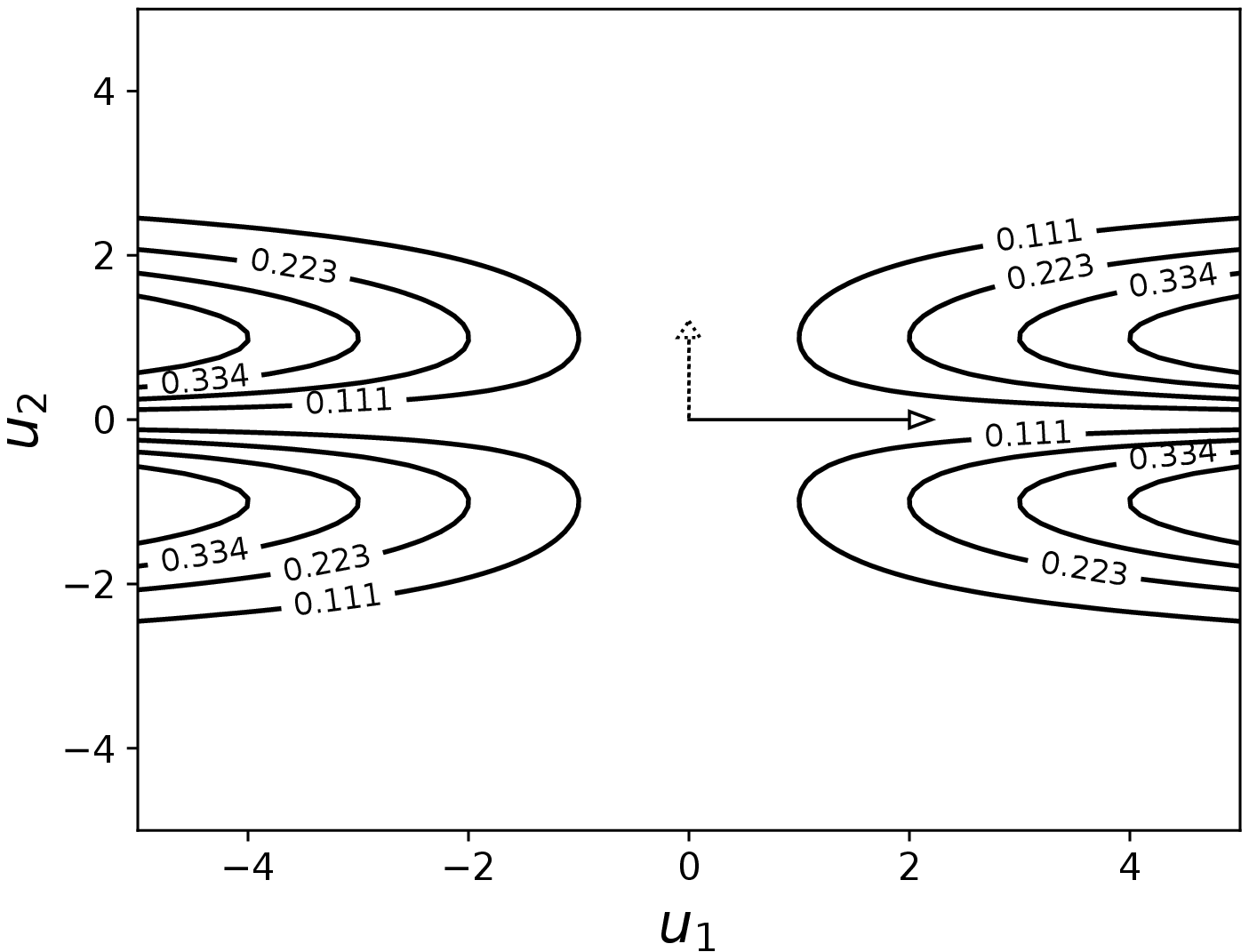}
    \label{fig:ModalPCA_IF}
  }
  \caption{Contour plot of norm of influence function for the estimate of $\text{MC}_1$ when the probability measure is $\mathcal{N}\left(0,\text{diag}(2,1)\right)$.}
  \label{fig:IF}
\end{figure}

\subsection{Breakdown Point}
\label{sec:BreakdownPoint}
Besides the influence function, the robustness of PCA can be investigated by using, for example, the breakdown point~\citep{Li1985,Croux2005}, the subspace distance~\citep{Yang2015}, and the expressed variance~\citep{Xu2013,Yang2015}. In this section, we consider the finite-sample breakdown point.

For the projection-pursuit-based methods~\citep{Li1985,Croux2005}, the breakdown point of the estimate of $\text{PC}$ is discussed via the breakdown point of the robust estimate of variance used in the projection-pursuit. However, we cannot adopt this approach, because our proposed method is not based on variance estimation.

To discuss and evaluate the robustness of PCA methods without variance estimation, such as our proposed method, we consider the angular breakdown point for unit-vector estimators for the robust discriminant analysis. 

\begin{defi}[Angular breakdown point for principal component]
  \label{defi:fsbp_eigenvector}
  Let $\hat{\bm{v}}_{k}$ be an estimator of $\text{PC}_{k}$ and $\mathcal{Y}$ be the domain of the observation. Given a set of observations $Y_{a} \subset \mathcal{Y}$, the finite-sample breakdown point~$\epsilon^{*}(\hat{\bm{v}}_{k},Y_{a})$ of $\hat{\bm{v}}_{k}$ for $Y_{a}$ is defined as
  \begin{align*}
    \epsilon^{*}(\hat{\bm{v}}_k,Y_{a})
    =  \min_b  \left\{ \frac{b}{a+b} \;\middle|\;  ^{\exists}Y_{b} \subset \mathcal{Y},\ \left| Y_{b} \right| = b,\hat{\bm{v}}_{k}(Y_{a} \cup Y_{b})^{\top} \hat{\bm{v}}_{k}(Y_{a}) = 0 \right\}.
  \end{align*}
\end{defi}

The rationale behind this definition is as follows. Given $Y_{a}$, we denote the estimate of $\text{PC}_k$ as $\hat{\bm{v}}_{k}(Y_{a})$. If we can select $Y_{b}$ and add $Y_{b}$ to $Y_{a}$ to obtain $\hat{\bm{v}}_{k}(Y_a \cup Y_{b})^{\top} \hat{\bm{v}}_{k}(Y_{a}) = 0$, then the estimate $\hat{\bm{v}}_{k}(Y_a \cup Y_b)$ can be said to be vertical to $\hat{\bm{v}}_{k}(Y_a)$. We note that similar definition of the angular breakdown point is recently proposed by~\citet{zhao2018} for linear classifiers. In their definition, the condition $\mathrm{sign} (\hat{\bm{v}}_{k}(Y_{a} \cup Y_{b})^{\top} \hat{\bm{v}}_{k}(Y_{a})) = -1$ is considered instead of $\hat{\bm{v}}_{k}(Y_{a} \cup Y_{b})^{\top} \hat{\bm{v}}_{k}(Y_{a}) = 0$. The minor components are considered to be identical up to $\pm$ sign, and our definition of the angular breakdown point for the principal component is slightly different from that used in~\citep{zhao2018}.

With $\epsilon^{\ast}(\hat{\bm{v}}_{k},Y_{a})$ in Definition~\ref{defi:fsbp_eigenvector}, we show the following theorem which gives the lower bound of the breakdown point for MPCA.
\begin{theo}[Lower bound of $\epsilon^{*}(\hat{\bm{w}}_{1},Y_{a})$ for MPCA]
  \label{theo:inequality}
  Let $\hat{\bm{w}}_{1}$ be the estimate of $\text{MC}_{1}$ obtained by solving problem~\eqref{eq:functional2}, and $\mathbb{R}^{d}$ be the domain of the observation. Given the observation dataset $Y_{a}=\left\{ \bm{x}_i \right\}_{i=1}^{a} \subset \mathbb{R}^{d}$, the following inequality holds.
\begin{align*}
    &\epsilon^{*}(\hat{\bm{w}}_{1},Y_{a}) > \frac{b^{*}}{a+b^{*}}
  ,\quad \text{where}\\
  & b^{*}=\lceil M_{a}(\hat{\bm{w}}_1(Y_{a})) - M_{a}^{*}(\hat{\bm{w}}_{1}(Y_{a})) \rceil - 1,\\
        &M_{a}(\hat{\bm{w}}_1(Y_{a})) = h\sqrt{2\pi} \sum_{i=1}^{a} \phi_h(\hat{\bm{w}}_{1}(Y_{a})^{\top}\bm{x}_i)
      ,\\
        &M_{a}^{*}(\hat{\bm{w}}_{1}(Y_{a}))
        = \sup \left\{
            h\sqrt{2\pi}\sum_{i=1}^{a}\phi_h(\bm{w}^{\top}\bm{x}_i) 
            \;
            \middle|
              \bm{w}\in \mathcal{S}^{d-1},
              \; \bm{w}^{\top}\hat{\bm{w}}_{1}(Y_{a})=0 \right\}
      .
  \end{align*}
\end{theo}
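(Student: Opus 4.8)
The plan is to argue by contradiction, showing that no contaminating set of size $b \le b^{*}$ can realize the breakdown condition in Definition~\ref{defi:fsbp_eigenvector}; strict monotonicity of $t\mapsto t/(a+t)$ then upgrades ``no admissible $b\le b^{*}$'' to $\epsilon^{*}(\hat{\bm{w}}_{1},Y_{a})\ge (b^{*}+1)/(a+b^{*}+1)>b^{*}/(a+b^{*})$. When $b^{*}\le 0$ the asserted bound holds trivially, since any breakdown requires at least one outlier.

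I would first fix notation consistent with the statement: for a finite $S\subset\mathbb{R}^{d}$ put $M_{S}(\bm{w}):=h\sqrt{2\pi}\sum_{\bm{x}\in S}\phi_h(\bm{w}^{\top}\bm{x})$, so that $M_{a}(\cdot)=M_{Y_{a}}(\cdot)$ and $M_{a+b}(\cdot):=M_{Y_{a}\cup Y_{b}}(\cdot)$ in the theorem's notation. Since $\phi(z)=e^{-z^{2}/2}/\sqrt{2\pi}$, each summand equals $e^{-(\bm{w}^{\top}\bm{x})^{2}/(2h^{2})}\in(0,1]$; hence (i) $M_{S}$ is a fixed positive multiple of the empirical objective in~\eqref{eq:functional2}, so $\hat{\bm{w}}_{1}(S)$ maximizes $M_{S}$ over the compact set $\mathcal{S}^{d-1}$; (ii) $0<M_{S}(\bm{w})\le |S|$; and (iii) for the (multiset) union, $M_{a+b}(\bm{w})=M_{a}(\bm{w})+M_{Y_{b}}(\bm{w})$ with $0<M_{Y_{b}}(\bm{w})\le b$. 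Moreover $M_{a}^{*}(\hat{\bm{w}}_{1}(Y_{a}))$ is the maximum of the continuous function $M_{a}$ over the compact set $\{\bm{w}\in\mathcal{S}^{d-1}:\bm{w}^{\top}\hat{\bm{w}}_{1}(Y_{a})=0\}$, hence attained, and $M_{a}^{*}(\hat{\bm{w}}_{1}(Y_{a}))\le M_{a}(\hat{\bm{w}}_{1}(Y_{a}))$.

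Now suppose, for contradiction, that some $Y_{b}$ with $|Y_{b}|=b\le b^{*}$ satisfies $\hat{\bm{w}}^{\top}\hat{\bm{w}}_{1}(Y_{a})=0$, where $\hat{\bm{w}}:=\hat{\bm{w}}_{1}(Y_{a}\cup Y_{b})$. Abbreviating $M_{a}:=M_{a}(\hat{\bm{w}}_{1}(Y_{a}))$ and $M_{a}^{*}:=M_{a}^{*}(\hat{\bm{w}}_{1}(Y_{a}))$, I would chain these facts:
\begin{align*}
M_{a} &< M_{a} + M_{Y_{b}}(\hat{\bm{w}}_{1}(Y_{a})) = M_{a+b}(\hat{\bm{w}}_{1}(Y_{a})) \le M_{a+b}(\hat{\bm{w}}) \\
      &= M_{a}(\hat{\bm{w}}) + M_{Y_{b}}(\hat{\bm{w}}) \le M_{a}(\hat{\bm{w}}) + b \le M_{a}^{*} + b,
\end{align*}
using in order $M_{Y_{b}}(\hat{\bm{w}}_{1}(Y_{a}))>0$; optimality of $\hat{\bm{w}}$ for $M_{a+b}$; $M_{Y_{b}}(\hat{\bm{w}})\le b$; and the fact that $\hat{\bm{w}}$ is a unit vector with $\hat{\bm{w}}^{\top}\hat{\bm{w}}_{1}(Y_{a})=0$, so it lies in the feasible set defining $M_{a}^{*}$. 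Thus $b>M_{a}-M_{a}^{*}$. Since $\lceil x\rceil-1<x$ for every real $x$, we obtain $b^{*}=\lceil M_{a}-M_{a}^{*}\rceil-1<M_{a}-M_{a}^{*}<b$, i.e.\ $b\ge b^{*}+1$, contradicting $b\le b^{*}$. Hence no admissible $Y_{b}$ has size $\le b^{*}$, and the claim follows as in the first paragraph.

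The argument invokes nothing deeper than $0<e^{-(\bm{w}^{\top}\bm{x})^{2}/(2h^{2})}\le 1$, the definition of the argmax, and compactness of $\mathcal{S}^{d-1}$. The only real care — hence the main obstacle — lies in the strict-versus-nonstrict bookkeeping that yields the $\lceil\cdot\rceil-1$ form (the strict inequality $M_{Y_{b}}(\hat{\bm{w}}_{1}(Y_{a}))>0$ is precisely what makes the bound on $b$ strict), in verifying that $M_{a}^{*}$ is attained and that the breakdown condition places $\hat{\bm{w}}$ in its feasible region, and in converting ``no admissible $b\le b^{*}$'' into the stated strict inequality on $\epsilon^{*}(\hat{\bm{w}}_{1},Y_{a})$ via the monotonicity of $b\mapsto b/(a+b)$.
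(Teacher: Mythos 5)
Your proposal is correct and follows essentially the same route as the paper's proof in Appendix~D: a contradiction argument that places $\hat{\bm{w}}_1(Y_a\cup Y_b)$ in the feasible set defining $M_a^{*}$ via the orthogonality condition, bounds each contaminating term by $h\sqrt{2\pi}\phi_h(z)\le 1$, and contradicts the optimality of $\hat{\bm{w}}_1(Y_a\cup Y_b)$ for the augmented objective. The only cosmetic difference is where the strict inequality enters (you use positivity of the Gaussian terms, the paper uses the strict hypothesis $b<M_a-M_a^{*}$), and your explicit handling of the ceiling bookkeeping and the monotonicity of $b\mapsto b/(a+b)$ is if anything slightly more careful than the paper's.
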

In Theorem~\ref{theo:inequality}, $M_{a}(\hat{\bm{w}}_{1}(Y_a)), M_{a}^{*}(\hat{\bm{w}}_{1}(Y_a))$ can be regarded as the number of data that are concentrated at the mode estimated by using $\hat{\bm{w}}_1, \hat{\bm{w}}_2$, respectively. The value $b^{*}$ is a lower bound of the number of outliers the dataset $Y_a$ is acceptable. Intuitively, when the degree of concentration by $\hat{\bm{w}}_{1}(Y_a)$ is larger than that by $\hat{\bm{w}}_{2}(Y_a)$, the value $b^{*}$ become large.
We consider that the LBBP is the most important contribution of our study. The derived LBBP can be computed using the given dataset. Thus, for example, we can calculate the LBBP using carefully obtained data without outliers or with only few outliers by preliminary experiments, and then estimate the tolerance of the obtained projection axes for outliers in the actual operation phase. This is possible only with the explicit formula for the LBBP.

\section{Algorithm}
\label{sec:algorithm}
In this section, we briefly describe the algorithm of the proposed MPCA. Since our main contribution is theoretically sound robust PCA, and for the sake of readability, we provide only an outline of the optimization algorithm and relegate the detailed description to the Appendix.

\subsection{Selection of Initial Point}
\label{sec:initialguess}
Since the objective function in the problem~\eqref{eq:optim1} is non-convex, it is important to select a good initial solution. We adopt the GRID algorithm proposed in the projection-pursuit robust PCA method~\citep{Croux2007}. The GRID algorithm searches the unit sphere for a better direction by multi-step grid search-like approach. \citet{Croux2007} empirically shows that GRID algorithm is able to evaluate most of possible directions efficiently.

\subsection{Optimization Problem on a Manifold}
In this subsection, we propose an algorithm to find a local optimal solution of problem~\eqref{eq:optim1} given an initial point $\bm{v}^{(0)}$. Since it is difficult to simultaneously optimize $m$ and $\bm{v}$, we solve the problem~\eqref{eq:optim1} in an iterative manner.

\mbox{}\\
\noindent{\textbf{Find a mode with fixed projection axes}}: We update $m$ by solving the following unconstrained optimization problem with respect to $m$:
\begin{align}
  m^{(l)}
    &=\argmax_{m\in\mathbb{R}} \; \frac{1}{N}\sum_{i=1}^{N} \phi_h(m-\bm{v}^{(l)\top}\bm{x}_i)
.\label{eq:optimize_m}
\end{align}
From Theorem~\ref{theo:convergence}, the objective function of this problem converges to the PDF of a projected random variable by a vector $\bm{v}$. The solution $m$ of the problem~\eqref{eq:optimize_m} is regarded as an estimate of mode, and the half-sample mode method proposed by~\cite{Bickel2006} can be used as a fast and reasonable estimator for this problem. In this study, we use the estimate $m_{(0)}$ obtained by the half-sample mode method as an initial point and apply the Newton method to obtain a higher precision solution.

\mbox{}\\
\noindent{\textbf{Optimize projection axis with fixed mode}}:
Consider the following optimization problem:
\begin{align}
  \begin{aligned}
    &\max_{\bm{v}\in\mathcal{S}^{d-1}} \quad \log \left[ \frac{1}{N} \sum_{i=1}^{N}\phi_h\left( m^{(l)}-\bm{v}^{\top}\bm{x}_{i} \right) \right]
  ,\\
    &
\quad 
\text{s.t.}\quad 
      \bm{v}^{\top}\hat{\bm{v}}_{j}=0, \quad j=1\dots k-1.
  \end{aligned}
\label{eq:optimize_v}
\end{align}
In this problem, $\frac{1}{N} \sum_{i=1}^{N}\phi_h\left( m^{(l)}-\bm{v}^{\top}\bm{x}_i \right)$ is non-negative and we can take the logarithm of the objective. By Jensen's inequality, we obtain
\begin{align}
  &\log \left[ \frac{1}{N}\sum_{i=1}^{N}\phi_h(m^{(l)}-\bm{v}^{\top}\bm{x}_{i}) \right]\notag \\ \notag
  &\geq \sum_{i=1}^{N} q_{i}^{(l)} \log \phi_h(m^{(l)}-\bm{v}^{\top}\bm{x}_{i})
        -\log N
        -\sum_{i=1}^{N}q_{i}^{(l)}\log q_{i}^{(l)},
\end{align}
where $q_{i}^{(l)}=\frac{\phi_h(m^{(l)}-\bm{v}^{(l)\top}\bm{x}_{i})}{\sum_{j=1}^{N}\phi_h(m^{(l)}-\bm{v}^{(l)\top}\bm{x}_{j})}$. Then, as a relaxation of the original problem~\eqref{eq:optimize_v}, we consider the following problem and update the estimate by its solution:
\begin{align}
  \begin{aligned}
    &\bm{v}^{(l+1)}
      =  \argmax_{\bm{v}\in\mathcal{S}^{d-1}} \; \sum_{i=1}^{N} q_{i}^{(l)} \log \phi_h(m^{(l)}-\bm{v}^{\top}\bm{x}_{i})
    ,\\
    &
    \quad 
    \text{s.t.}\quad
        \bm{v}^{\top}\hat{\bm{v}}_{j}=0, \quad j=1\dots k-1.
  \end{aligned}
\label{eq:optimize_v2}
\end{align}
We note that $\phi_h(z)=\phi\left(z/h\right)/h,\ \phi(z)=\exp\left(-z^2/2\right)/\sqrt{2\pi}$. Hence, the problem~\eqref{eq:optimize_v2} is equivalent to the following problem:
\begin{align}
  \begin{aligned}
    &\bm{v}^{(l+1)}
      =  \argmin_{\bm{v}\in\mathcal{S}^{d-1}} \quad \sum_{i=1}^{N} q_{i}^{(l)} (m^{(l)}-\bm{v}^{\top}\bm{x}_{i})^2
  ,\\
    &
   \quad
   \text{s.t.}\quad
        \bm{v}^{\top}\hat{\bm{v}}_{j}=0, \quad j=1\dots k-1.
  \end{aligned}
\label{eq:optimize_v3}
\end{align}
We set $G^{(l)}(\bm{v})=\sum_{i=1}^{N} q_{i}^{(l)}(m^{(l)}-\bm{v}^{\top}\bm{x}_{i})^2$ henceforth. Since the domain of $\bm{v}\in\mathbb{R}^d$ is restricted to $\bm{v}\in\mathcal{S}^{d-1}$, the constrained optimization problem~\eqref{eq:optimize_v3} is formulated as an unconstrained optimization problem on a manifold $\mathcal{S}^{d-1}$. In our problem, the manifold $\mathcal{S}^{d-1}$ is a simple set and we can explicitly calculate its local coordinate $\bm{\varphi}_{0}:\mathcal{S}^{d-1}\setminus\left\{-\bm{v}_{0}\right\}\to\mathbb{R}^{d-k}$, as detailed in the Appendix.
By using this local coordinate, solving the following unconstrained problem
\begin{align}
  \bm{\beta}^{(l+1)}
    =  \argmin_{\bm{\beta}\in\mathbb{R}^{d-k}} \quad G^{(l)}(\bm{\varphi}_{0}^{-1}(\bm{\beta}))
\label{eq:optimize_v4}
\end{align}
is shown to be equivalent to solving~\eqref{eq:optimize_v3} (the proof is given in the Appendix).

\subsection{Bandwidth Selection}
\label{sec:bandwidth_selection}
The proposed method requires bandwidth selection in the kernel density estimation procedure. From Theorem~\ref{theo:convergence}, the objective function of the proposed method converges to the PDF of the projected data by $\bm{v}$. Thus, we select the bandwidth in every step of update of $\bm{v}$ based on the projected data $\left\{\bm{v}^{\top}\bm{x}_{i}\right\}_{i=1}^{N}$. 
Among a large number of bandwidth selection methods~\citep{Sheather1991,Botev2010,Terrell1990,DBLP:conf/icmla/Yamasaki19}, we adopt the method proposed by~\cite{Terrell1990} because of its computational efficiency. According to \cite{Terrell1990}, when we use the Gaussian kernel function, the kernel bandwidth parameter can be estimated as $h\approx 1.144 \hat{s} N^{-1/5}$, where $\hat{s}$ is the estimate of the scale parameter of the population and we use the median absolute deviation of the observed data.

\section{Experiments}
\label{sec:experiments}
We evaluate the performance of the proposed and conventional methods on several artificial and real datasets.
In the same manner as in~\citep{DBLP:journals/csda/Scrucca11}, we
evaluate the difference between two subspaces spanned by matrices $B^{\ast}$ and $\hat{B}$ by the spectral distance: 
\begin{align*}
\mathrm{specdist} (\hat{B},B^{\ast}) =&  \arcsin \Large(
		  \|\hat{B}(\hat{B}^{\top} \hat{B})^{-1} \hat{B}^{\top} 
		   - 
		  B^{\ast}(B^{\ast \top}B^{\ast})^{-1}B^{\ast \top}\|_{S}
\Large) ,
\end{align*}
which can be regarded as the maximum angle (closeness) between column spaces of $B^{*}$ and $\hat{B}$. Here, the spectral norm $\left\| A \right\|_{S}$ is calculated by the maximum singular value of the matrix $A$. 
With a simple example, we explain what the spectral distance evaluates. Suppose that column vectors of a matrix $A = (\bm{a}_1 \; \dots \; \bm{a}_k )$ are linearly independent, and a projection matrix $P_A$ onto the space spanned by the column vectors is represented by $A(A^{\top}A)^{-1}A^{\top}$. In figure~\ref{fig:exams},
\begin{align*}
  V_1 &= \begin{pmatrix}
    0 & 0\\
    1 & 0\\
    0 & 1
  \end{pmatrix}
,&
  V_2 &= \begin{pmatrix}
    0 & 0\\
    \frac{1}{\sqrt{2}} & -\frac{1}{\sqrt{2}}\\
    \frac{1}{\sqrt{2}} & \frac{1}{\sqrt{2}}
  \end{pmatrix}
,&
  V_3 &= \begin{pmatrix}
    0 & \frac{1}{\sqrt{2}}\\
    1 & 0\\
    0 & \frac{1}{\sqrt{2}}
  \end{pmatrix}
,
\end{align*}
where $V_2$ and $V_3$ are matrices in which $V_1$ is rotated $45$ degrees around the x-axis and the y-axis, respectively. Because the space spanned by column vectors of $V_1$ is the same as that of $V_2$, $\text{specdist}(V_1,V_2) = 0$ holds. The construction of $V_3$ leads that the maximum angle between the space by $V_1$ and that by $V_3$ is $\pi/4$. This is consistent with $\text{specdist}(V_1,V_3) = \pi/4$. More detailed features of the spectral distance are discussed in~\citep{Meyer2000}.

\begin{figure}[ht]
  \centering
  \subfigure[Subspaces spanned by $V_1$ and $V_2$]{
    \includegraphics[width=.4\textwidth]{./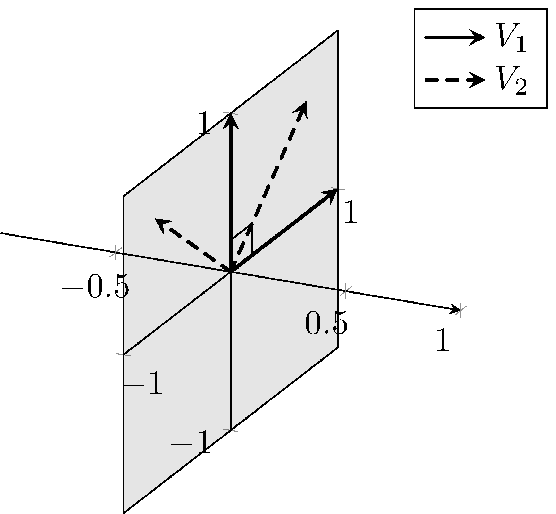}
    \label{fig:exam1}
  }
  \subfigure[Subspaces spanned by $V_1$ and $V_3$]{
    \includegraphics[width=.4\textwidth]{./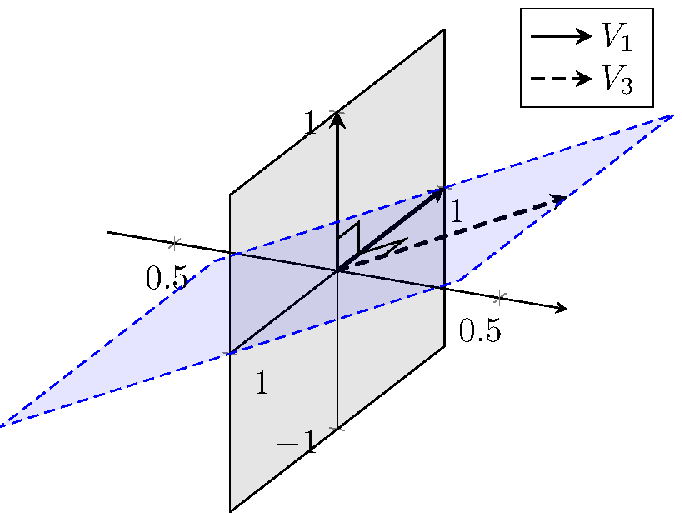}
    \label{fig:exam2}
  }
  \caption{Examples of spectral distance}
  \label{fig:exams}
\end{figure}

We compare the proposed method to classical PCA (cPCA), a projection-pursuit based method (PP;~\citet{Li1985}), a robust covariance-based method (CoP;~\citet{Rahmani2017}), and three methods based on $\ell_1$-norm for subspace identification (R1PCA;~\citet{Ding2006}, REAPER;~\citet{Lerman2015}, DPCP;~\citet{JMLR:v19:17-436}). We include the RPCAOM proposed by~\citet{Nie2014}, which performs centralization and subspace identification simultaneously, like our proposed method does. As representative of sampling-based methods, we also consider HRPCA~\citep{DBLP:conf/colt/XuCM10} and TMPCA~\citep{8451752}\footnote{All of the experiments are run on MacPro with Intel Core i7 processor and 128GB RAM.
Simple R implementation for our proposed method will be made openly available when this paper is published.}

\subsection{Artificial Dataset}
To observe the effect of the ratio of outliers, we first consider a $d=20$-dimensional random variable $X \sim \mathcal{N}(\bm{0},\Sigma_g)$ where $\Sigma_g = \text{diag}(1,1/2^2, \dots, 1/20^2)$ for the {\it{Gaussian}} case, and $X \sim Lap(\bm{1}, \Sigma_l)$ where the $j$-th dimension of $Lap(\bm{1},\Sigma_l)$ has a Laplace distribution with PDF $\exp(-|x-1|/(10/j^2))/(2 \times 10/(j^2))$ for the {\it{Laplace}} case. For $N=200$ realizations of the random variable $X$, we replace $100 \epsilon$\% of the data with outliers drawn from a uniform distribution $\mathcal{U}[(-1, 1.5)^{20}]$. The ground-truth subspace matrix $E_{k}$ is composed of canonical unit vectors with one at only one element and zero at the others. We take $k$ to be the minimum dimension that contains $95$\% of the eigenvalues. The average and one standard deviation of $\mathrm{specdist}$ values of the PCA methods in $100$ independent run are shown in Fig.~\ref{fig:art}. 

\begin{figure*}[t!]
  \centering
  \subfigure[Gaussian with varying outlier ratios]{
    \includegraphics[width=60mm,height=60mm]{./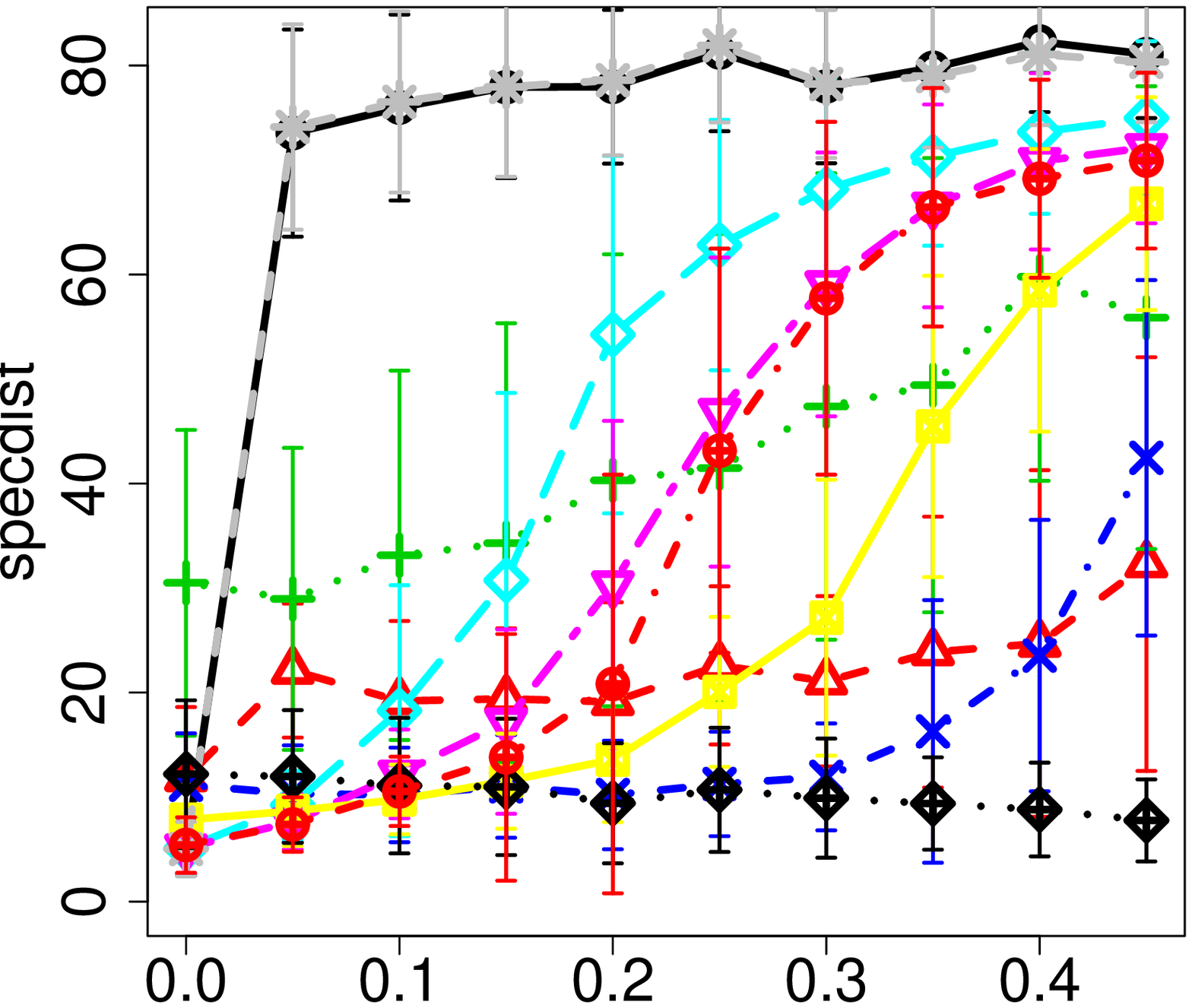}
    \label{fig:normal}
  }
  \subfigure[Laplacian with varying outlier ratios]{
    \includegraphics[width=75mm,height=60mm]{./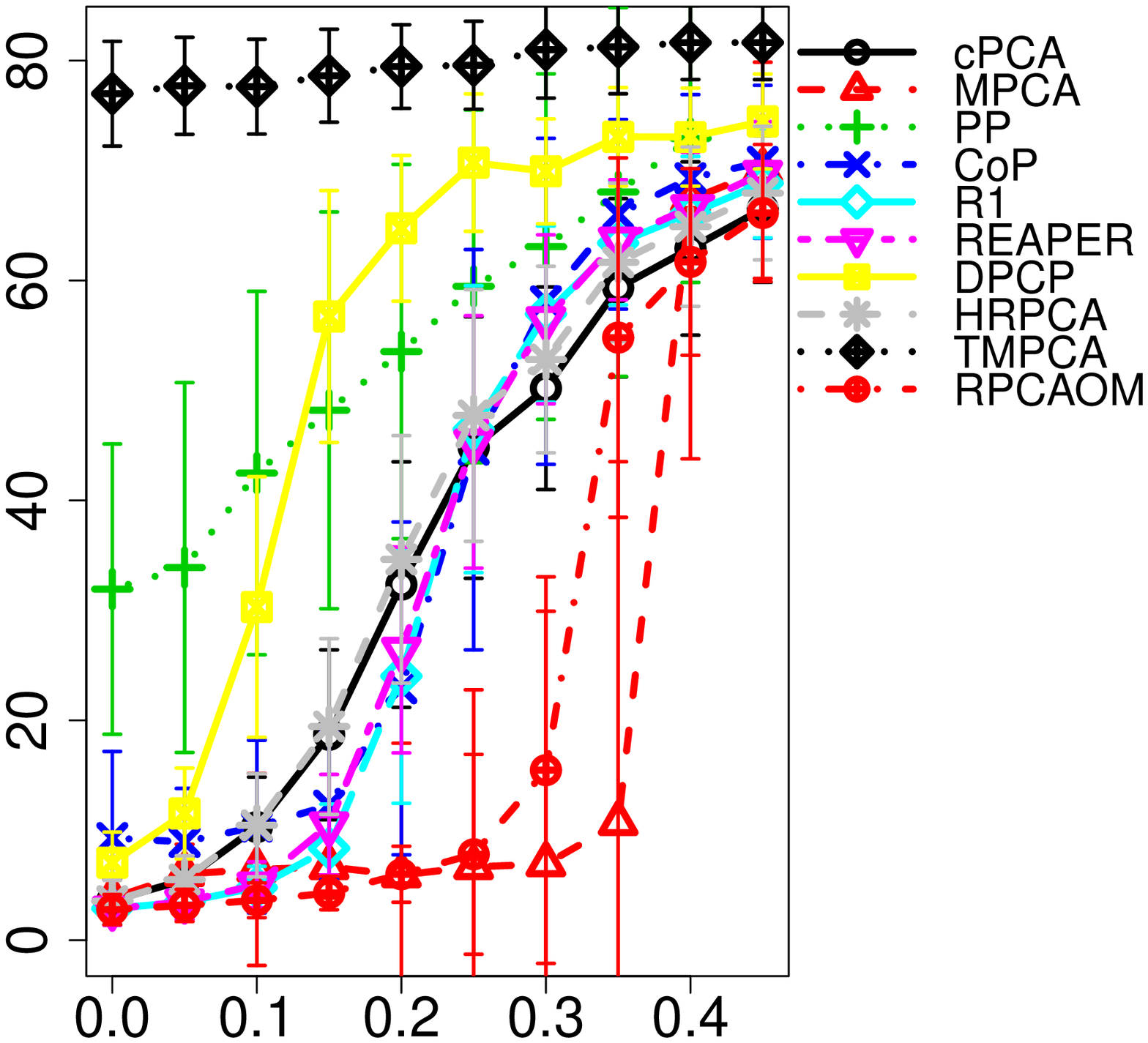}
    \label{fig:laplace}
  }
   \subfigure[Gaussian with varying \# of observations]{
    \includegraphics[width=60mm,height=60mm]{./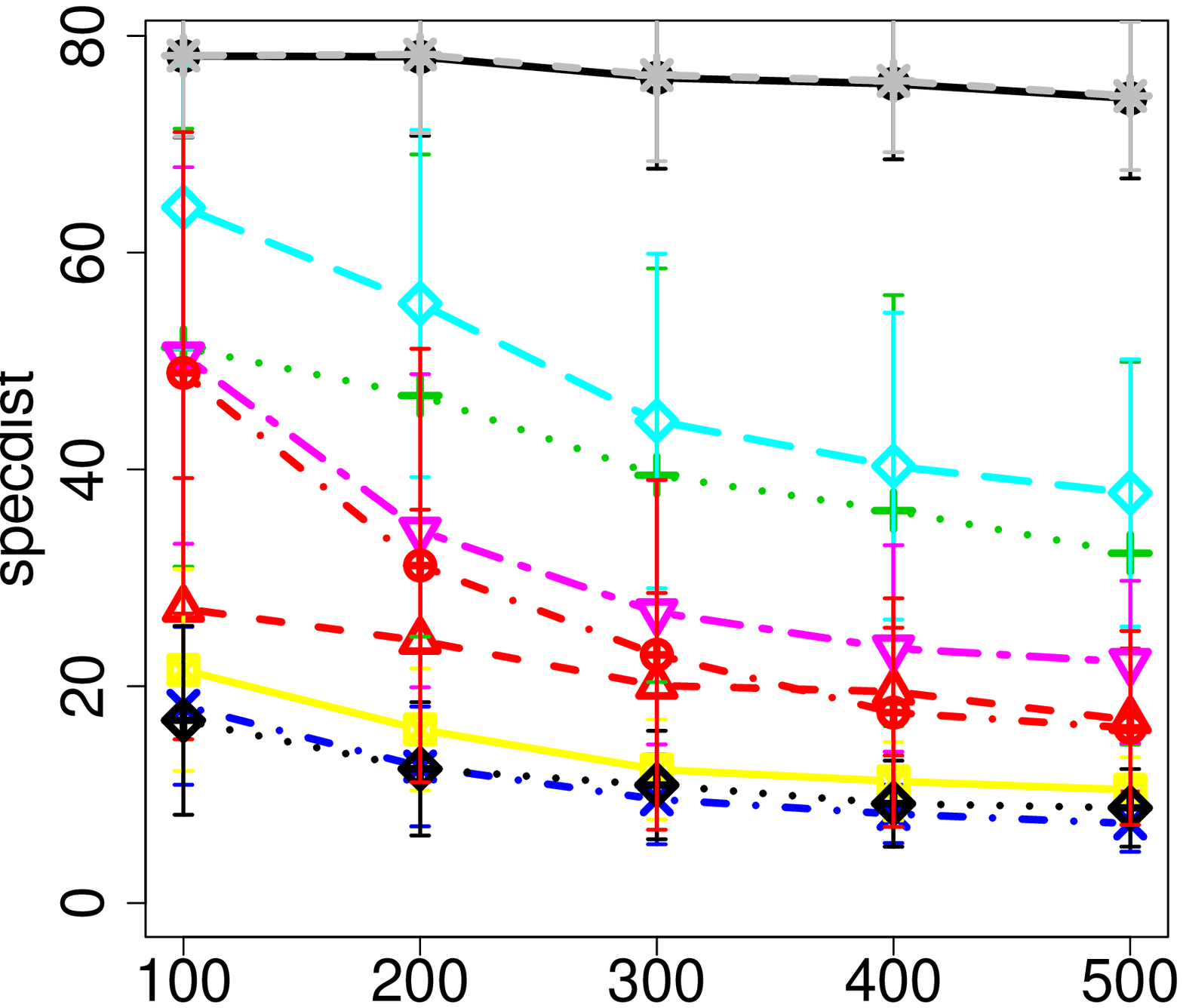}
    \label{fig:normalnum}
  }
  \subfigure[Laplacian with varying \# of observations]{
    \includegraphics[width=60mm,height=60mm]{./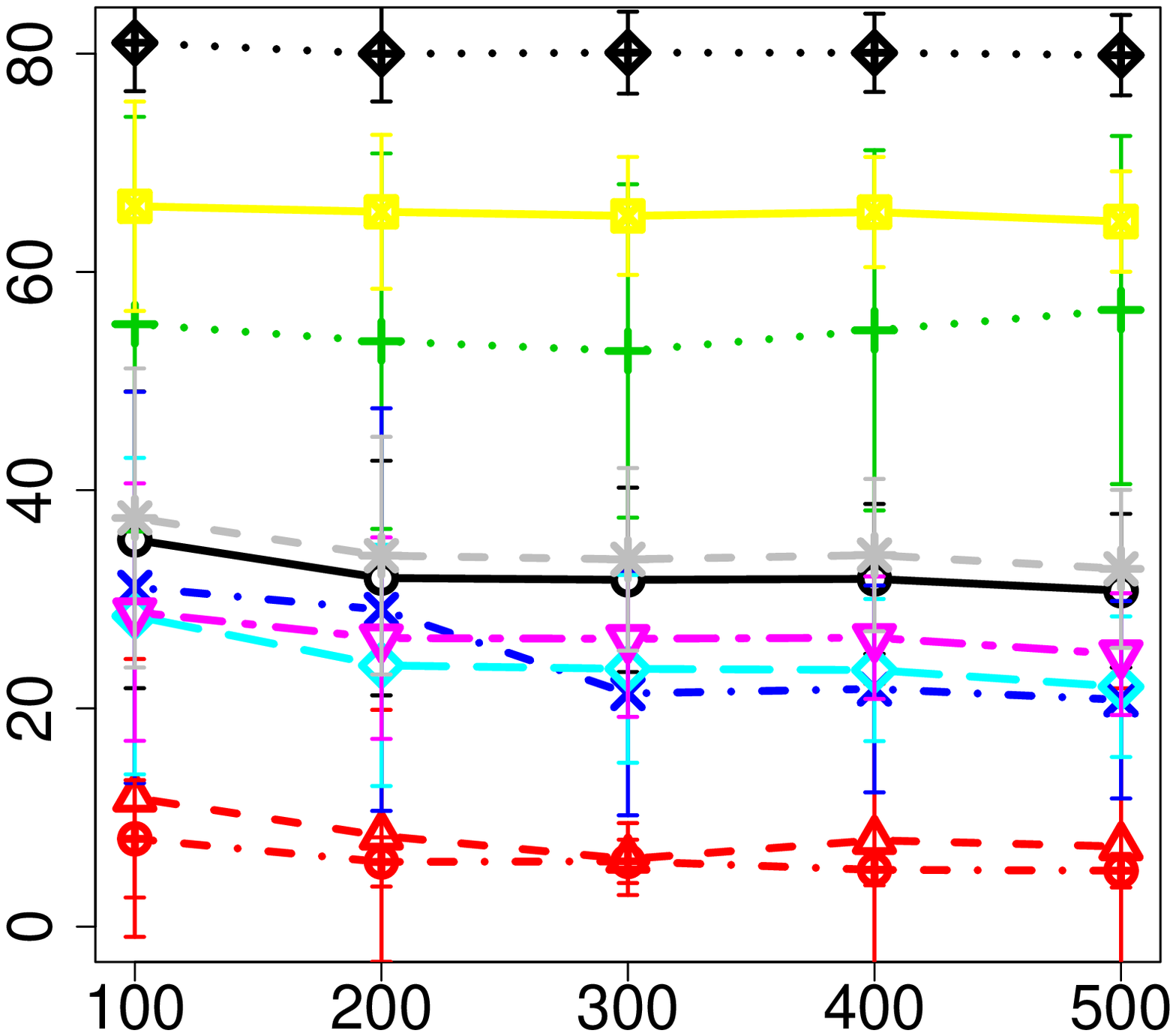}
    \label{fig:laplacenum}
  }
   \caption{The $\mathrm{specdist}$ values when the ratio of outliers varies ((a),(b)), and when the number of observations varies ((c),(d)). The inliers are from Gaussian ((a),(c)), and from Laplacian ((b),(d)).
  \label{fig:art}
  }
\end{figure*}

Figures~\ref{fig:art} (a) and (b) show that with an increase in the ratio of outliers, the performance of the estimate deteriorates. From these figures, we observe that cPCA is sensitive to a few outliers. REAPER is robust to outliers when the outlier ratio is below 0.2, but deteriorates sharply when $\epsilon > 0.2$. Similarly, DPCP and R1 show favorable performance when $\epsilon$ is very small, but rapidly deteriorate with the increase of the noise level. HRPCA and CoP are quite insensitive to outliers in the Gaussian case, but worth than other methods, including cPCA, in the Laplacian case. The proposed method is stable even with a high noise level when the distribution is the Gaussian case, and offers smallest {\rm{specdist}} value in the Laplacian case. 

We next consider the effect of the number of samples by varying the number of samples while fixing the ratio of outliers to $20$\%. The experimental result is shown in Fig.~\ref{fig:art} (c) and (d). With the increase of $N$, the performance of the most methods show improvement. PP, CoP, DPCP, and TMPCA work quite well for the Gaussian case while MPCA is the best method for the Laplacian case. Overall, the proposed method has favorable characteristics with a decrease in sample quality and an increase in sample size.

In Section~\ref{sec:bandwidth_selection}, we state that we adopt the bandwidth selection method proposed by~\citet{Terrell1990}. To observe the effect of the bandwidth selection methods, we perform a simple experiment using the above mentioned two types of artificial datasets with $n=100$ and $d=10$. We compare the spectral distances achieved by the proposed MPCA with four different bandwidth selection methods, those by~\citet{Terrell1990}(marked as {\tt{Terrell}}) in Figure~\ref{fig:art_comp_h}), \citet{Silverman86}({\tt{SRTS}}), \citet{Sheather1991}({\tt{SJ}}), and by~\citet{silverman_mode1981}({\tt{CRIT}}). The first three methods are developed for the use of the kernel density estimator, while the last method is for finding the mode of the distribution. Figure~\ref{fig:art_comp_h} shows the average and one standard deviation of specdists obtained by $100$ repetitions of estimation. In the Gaussian distribution case, {\tt{CRIT}} performs poorly and there is almost no difference between the other three methods. For the Laplace distribution case, {\tt{CRIT}} is favorable when the ratio of outliers is high, but is worse than the other three methods when the amount of outliers is not very high. Overall, there is no significant difference between the three bandwidth selection methods for KDE~({\tt{Terrell}, \tt{SRTS}, and {\tt{SJ}}}). As for the computational efficiency, {\tt{Terrell}} and {\tt{SRTS}} are comparable and {\tt{SJ}} takes twice as long as {\tt{Terrell}}. {\tt{CRIT}} is more than $100$ times slower than {\tt{Terrell}}. From this simple experimental result, we recommend using the method of~\citet{Terrell1990} or~\citet{Sheather1991} for the bandwidth selector in our proposed MPCA.

\begin{figure}[t!]
  \centering
  \subfigure[Gaussian with varying outlier ratios]{
    \includegraphics[width=70mm,height=60mm]{./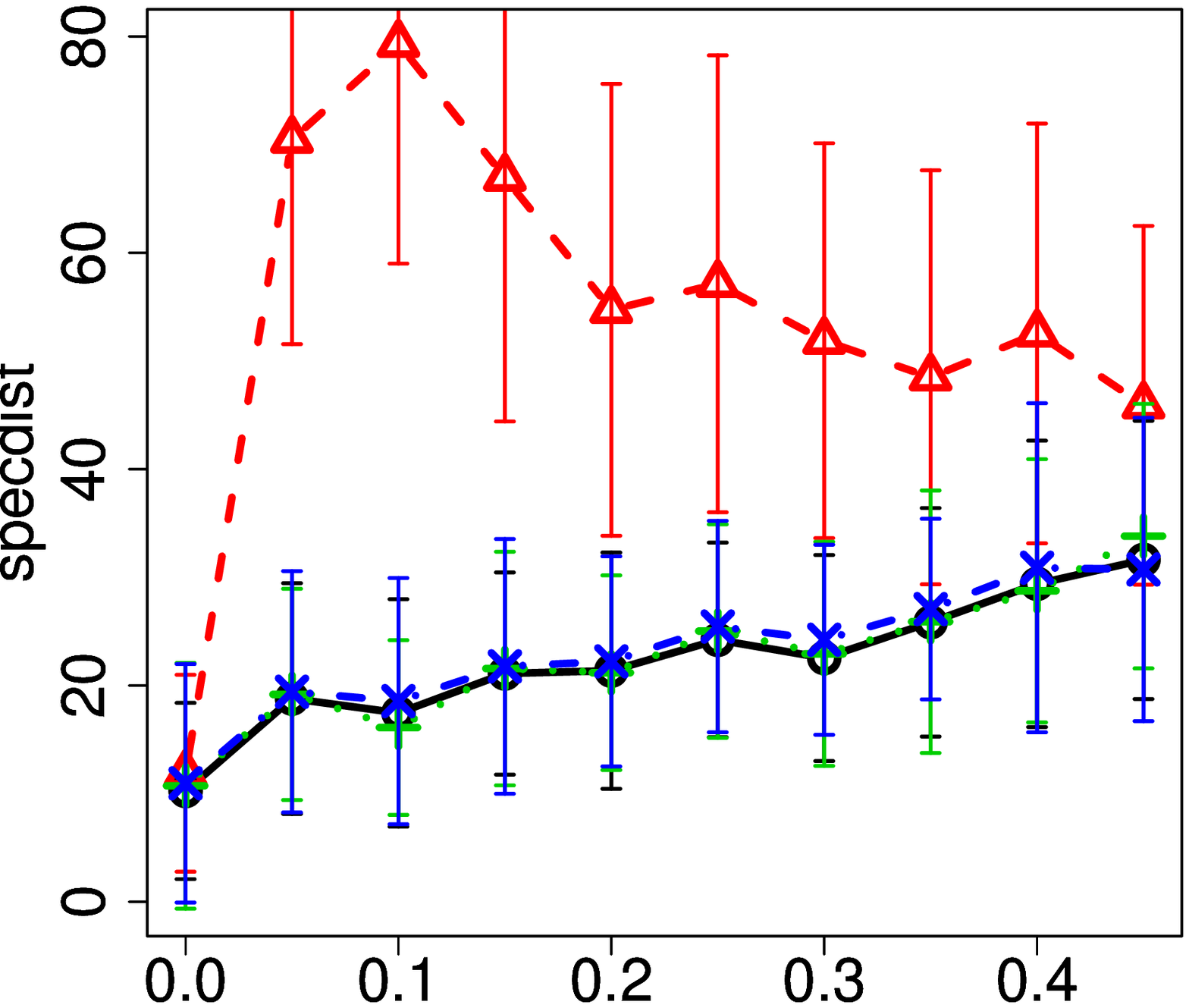}
    \label{fig:normal_comp_h}
  }
  \subfigure[Laplacian with varying outlier ratios]{
    \includegraphics[width=60mm,height=60mm]{./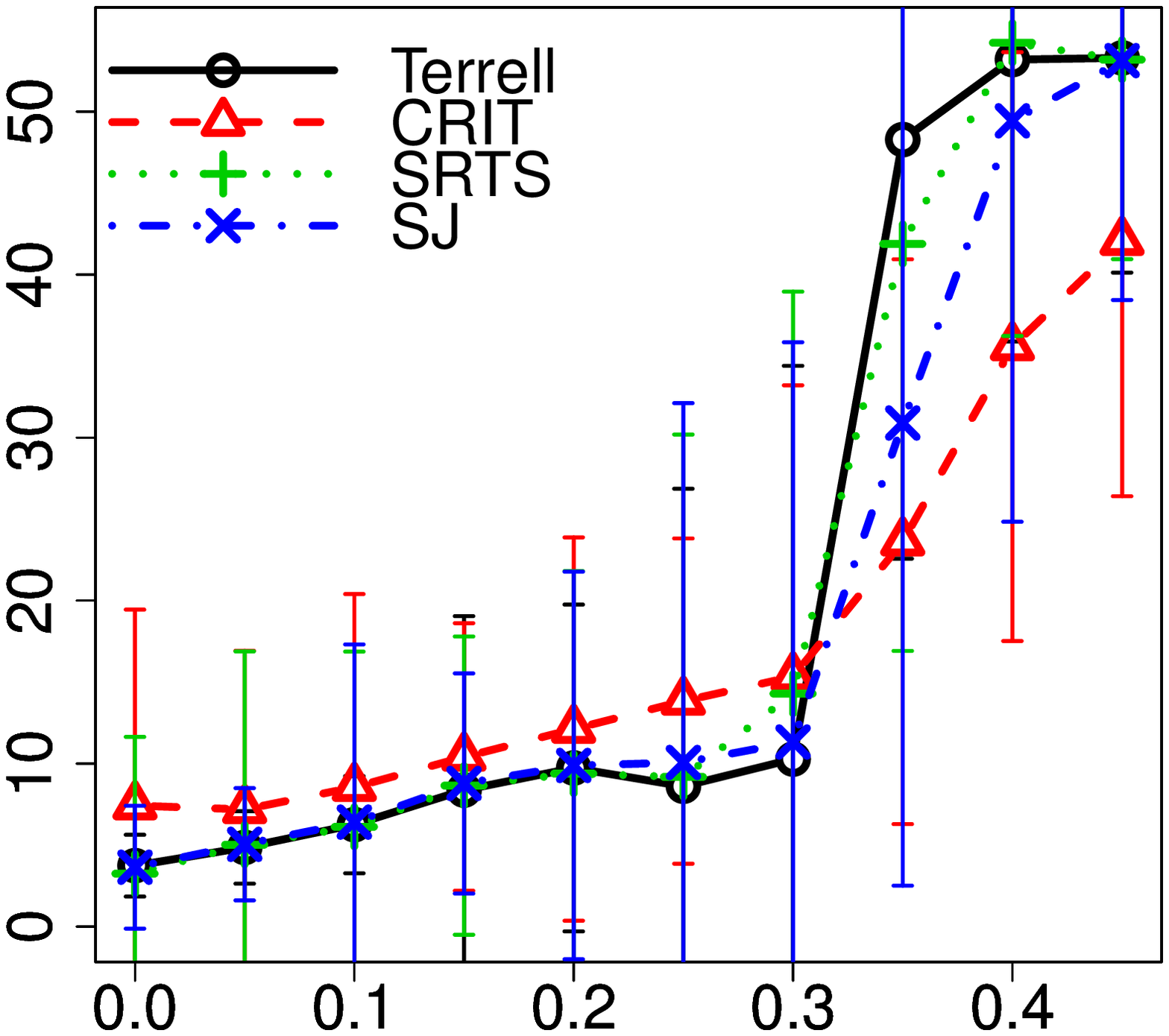}
    \label{fig:laplace_comp_h}
  }
   \caption{The $\mathrm{specdist}$ values when the ratio of outliers varies with MPCA using different bandwidth selection methods.
  \label{fig:art_comp_h}
  }
\end{figure}

\begin{landscape}
\begin{table*}[t]
  \centering
\caption{Spectral distances for real-world datasets}
\label{realresult}
 \scalebox{.8}{
 \begin{tabular}{l||c|c|c|c|c|c|c|c|c|c}

Dataset & cPCA & MPCA & PP & CoP & R1 & REAPER & DPCP & HRPCA & TMPCA & RPCAOM \\
 (in/out/dim)&&&&&&&&& \\ \hline
    wine (119/10/13) & 10.3$\pm$1.0 & 16.1$\pm$7.2 & 72.0$\pm$10.7 & 15.4$\pm$6.5 & 12.4$\pm$1.5 & 13.6$\pm$1.5 & 16.0$\pm$8.0 &10.2$\pm$8.4& 89.6$\pm$9.8 & 14.4 $ \pm$ 2.1\\
	 wbc (357/21/30) &
	 68.1$\pm$8.6 & 36.0$\pm$12.3 & 81.7$\pm$3.6 & 30.5$\pm$19.4 & 69.0$\pm$7.6 & 43.3$\pm$18.0 & 27.3$\pm$13.0 &66.8$\pm$8.7 & NA & 79.7 $ \pm$ 7.5\\
	 vertebral (210/30/6)&
	 10.8$\pm$3.3 & 6.1$\pm$14.1 & 40.0$\pm$18.7 & 24.3$\pm$13.7 & 13.0$\pm$3.4 & 10.4$\pm$2.1 & 8.7$\pm$1.9 &8.2$\pm$2.3 &59.7$\pm$18.0 & 5.9 $ \pm$ 3.9 \\
	 thyroid (3679/93/6)&
	 67.0$\pm$21.4 & 1.4$\pm$0.7 & 49.9$\pm$22.7 & 1.5$\pm$0.3 & 87.7$\pm$27.3 & 79.4$\pm$9.3 & 0.3$\pm$ 0.5& 66.7$\pm$ 21.2& NA  & 88.8 $\pm$ 27.7\\
    pendigits (6714/156/16)&
    6.1$\pm$2.18 &  5.4$\pm$3.49 & 45.2$\pm$18.56 & 5.1$\pm$0.49 & 2.5$\pm$1.42 &  1.7$\pm$0.31 & 10.9$\pm$5.11 & NA&NA & NA
\end{tabular}
}
\end{table*}
\end{landscape}

\subsection{Real-world Datasets}
We evaluate the performance of robust PCA methods on five real-world datasets obtained from the UCI Machine Learning Repository (details are explained in the Appendix). In real-world datasets, there is no ground-truth projection matrix or dimension-reduced subspace. We estimate the principal directions by using only inliers, which are regarded as the ground-truth directions. The {\rm{specdist}} between the ground-truth and those estimated with all data, including the outliers, are used to evaluate the performance of PCA methods. 

Table~\ref{realresult} summarizes the profile of datasets and median $\pm$ standard deviation of {\rm{specdist}} values evaluated by 10-fold CV. The entry of the table for which the computational time exceeded five minutes is filled by {\tt{NA}}. 
A possible explanation for the slowness of sampling-based methods is that TMPCA requires many subsamplings, and HRPCA executes many iterations with an increase of the sample size. 
There is no method that outperforms another in all cases, but the proposed method offers comparable performance in many cases and is an alternative option to conventional PCA methods when the given data seem to be contaminated by outliers.

\subsection{Discussion on the Experimental Results}

From Figure~\ref{fig:art}, we observe that the standard deviation of the estimates by MPCA tend to be larger than those of the other methods. This also holds for one of the real-world datasets, {\tt{vertebral}}. In our proposed algorithm, the failure to find a good initial estimate could make the performance worse than the other methods, which we consider explains the relatively large variance of the proposed method. Nevertheless, MPCA is stable even with high noise level when the distribution is the Gaussian case, and offers the smallest specdist value in the Laplace case. 
 
There are various robust PCA methods, and it is difficult to find a situation in which the proposed method is clearly superior to others. One of the major distinguishing features between the proposed the proposed MPCA and many other robust PCA methods is whether the center of the data space and subspace are identified separately or simultaneously.
CoP, R1, REAPER, DPCP, and HRPCA assume that data are centered, and must be used in combination with some robust centering method. In our implementation, the geometric median is used. Robust centering methods, such as geometric median, are robust in themselves, but are still affected by outliers. If this centering does not work well, the resulting robust PCA method cannot achieve the expected performance.
In addition, as Figure~\ref{fig:art} shows, the RPCAOM shows similar performance for artificial data (especially for Laplace data) as the proposed method. This finding suggests that simultaneous optimization approaches, such as MPCA and the RPCAOM, are better when centering methods, such as geometric median, do not work well owing to a large number of outliers. In summary, one of the advantages of the proposed method is its simultaneous operation of centering and subspace identification. Compared to the RPCAOM, which also perform centering and subspace identification simultaneously, the proposed method is accurate and computationally efficient, as Table~1 shows.

\subsection{Evaluation on the Lower Bound of the Breakdown Point}
We show simple experimental results on the lower bound of the breakdown point (LBBP) stated in Theorem~\ref{theo:inequality}. We consider a three-dimensional dataset, in which the first two dimensions are generated from~$\mathcal{N}(\bm{0}, \mathrm{diag}(1,0.3))$ and the third dimension is from $\mathcal{N}(0,\sigma_{z})$, where $\sigma_{z} < 0.3$. The $\text{PC}_1$ for this generative model should be $(1,0,0)$ and $\text{MC}_1$ should be $(0,0,1)$. We sample $500$ points and evaluate the LBBP value on this clean dataset. Then, we generate a contaminated sample by generating another dataset of size $500$ and replacing $500 \times \alpha$ points with outliers, where $\alpha$ is varied in $\{0.01,0.02, \dots, 0.50\}$. The first two dimensions of outliers are generated from $\mathcal{N}(\bm{0}, 0.01 \times \mathrm{diag}(1,0.3))$ and the third dimension is from $\mathcal{N}(150, \sigma_{z})$. The third coordinate of the outliers is dominant and with a small number of outliers, $\text{MC}_1$ would be orthogonal to that of inliers $(0,0,1)$. We vary $\sigma_z$ to obtain different LBBPs, because it is not easy to control the LBBP value. 

For each noise fraction $\alpha$, we calculate the cosine of $\text{MC}_1$ vector obtained from clean and contaminated datasets $100$ times by changing the seed of the random number generator. We show the results when LBBP values are $0.093$ and $0.201$ in Fig.~\ref{fig:lbbp}. From Fig.~\ref{fig:lbbp} (left), we observe that $\text{MC}_1$s estimated from clean and contaminated data could become almost orthogonal when the fraction of outliers is about $0.21$, which is larger than $0.094$. Figure~\ref{fig:lbbp} (right) shows that $\text{MC}_1$ can become almost orthogonal when the fraction of outliers is about $0.36$, which is larger than $0.201$. This experimental result is consistent with Theorem~\ref{theo:inequality}.

\begin{figure*}[!t]
  \centering
    \includegraphics[width=65mm,height=60mm]{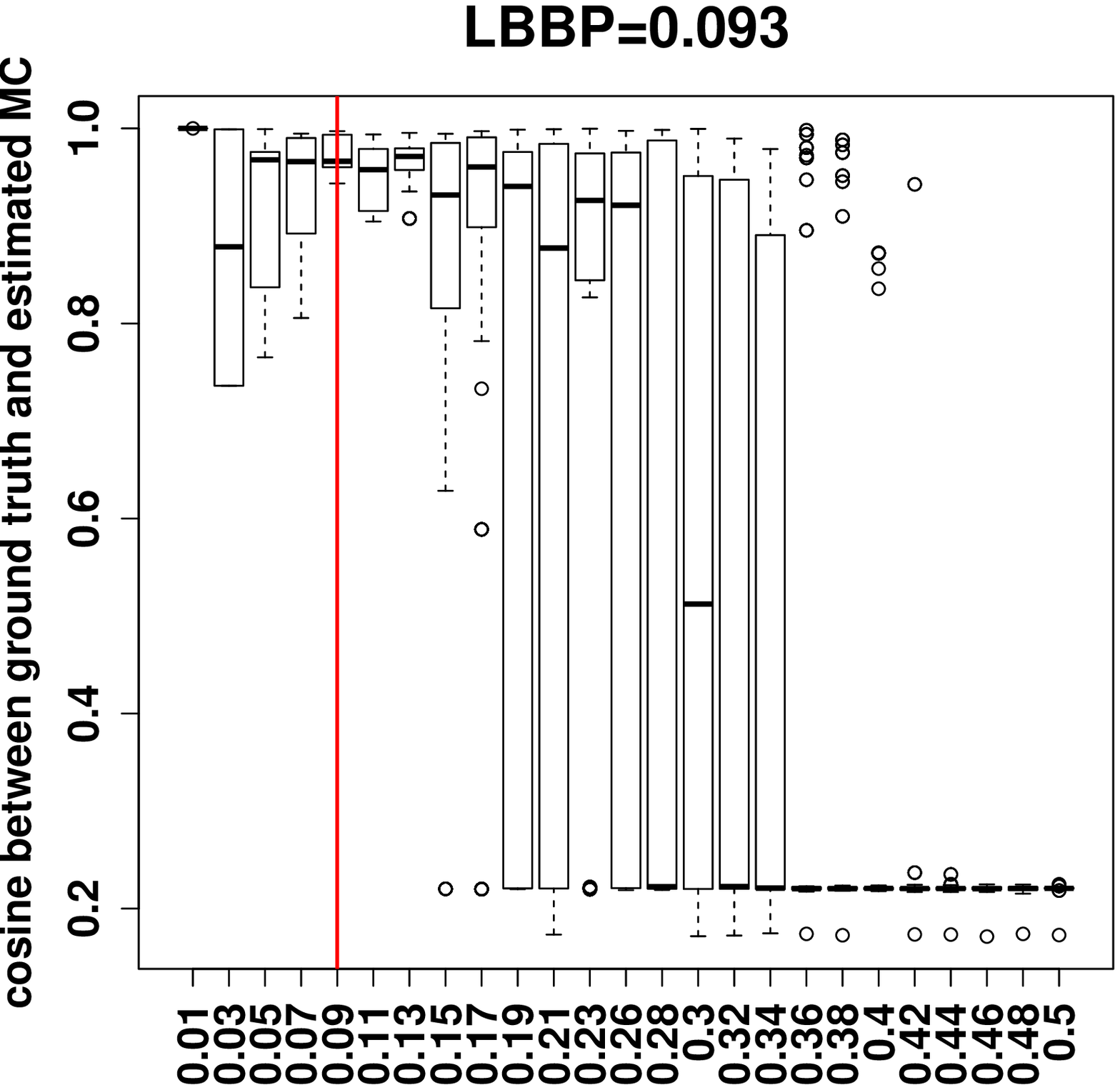}
    \includegraphics[width=65mm,height=60mm]{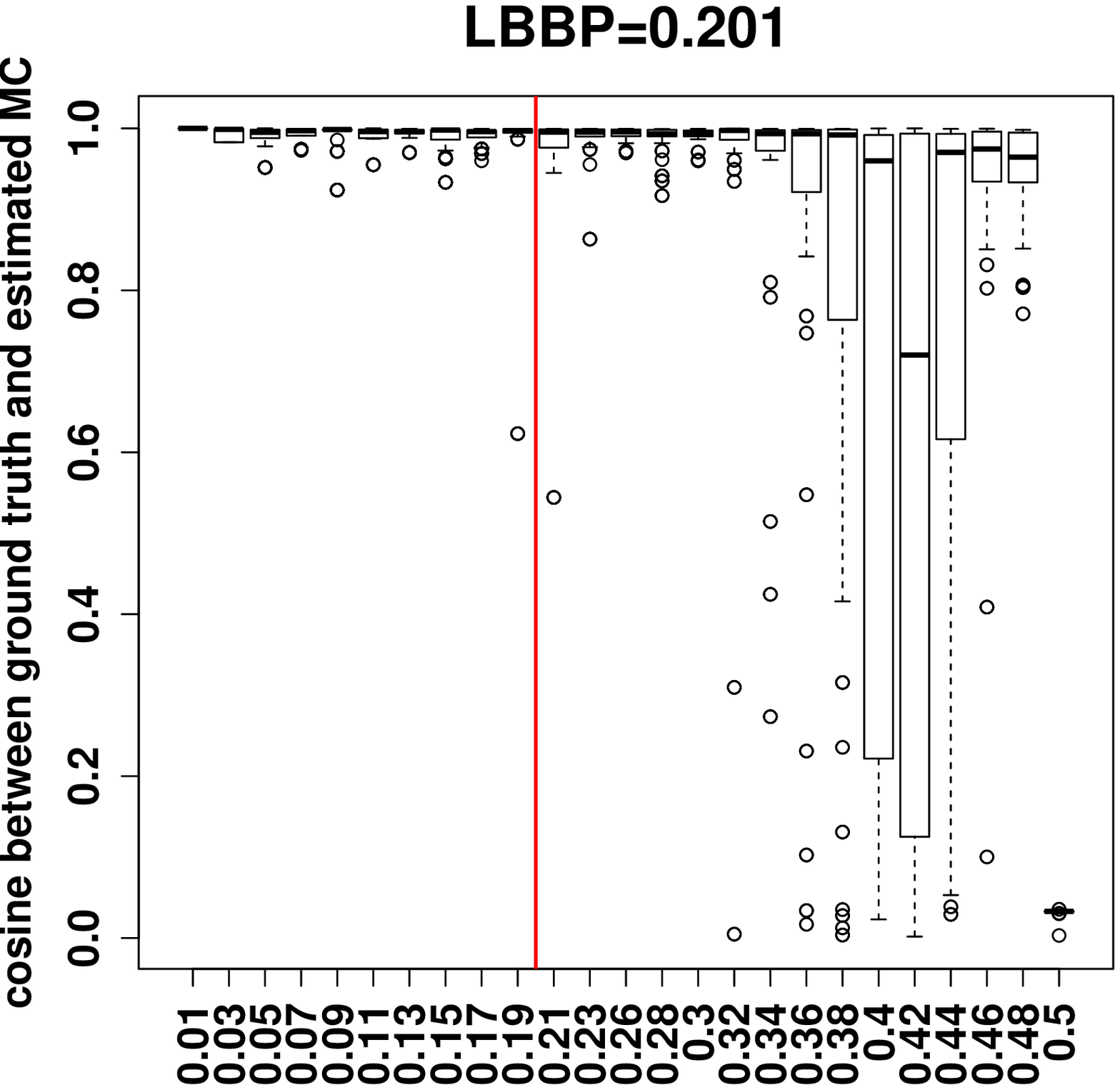}
   \caption{The cosine between $\text{MC}_1$s of noise-free and contaminated datasets 
  \label{fig:lbbp}
  }
\end{figure*}

\section*{Conclusion}
\label{sec:conclusion}
In this study, we proposed the mode-based PCA for contaminated data analysis. This method is characterized by a novel definition of a minor component. We derived the uniform convergence in probability for the proposed objective function. We provided robustness analyses of MPCA using the influence function and the breakdown point. The experiments on synthetic and real-world datasets showed comparable performance to other robust PCA methods. In addition, the derived lower bound of breakdown point is experimentally supported. The proposed method has good theoretical properties and favorable experimental results.

This study focused on theoretical properties and the basic algorithm to obtain a solution. The experiments showed that the algorithm works well. There are many important directions for future work. Scaling to more high-dimensional datasets is of practical importance. The stability of the algorithm could be improved by, for example, developing a better method for finding the initial point. Deeper theoretical analyses of the proposed method, such as asymptotic properties, is another interesting research direction. In particular, the obtained rate $\mathcal{O}_{\text{a.co.}}\left( n^{-\frac{1}{k}} \right)$ in Theorem~\ref{theo:convergencerate} is slower than the standard rate for the kernel density estimation and the mode estimation~\citep{Rao1983,Vieu1996,Shi2009} because of the complicated relation between $m$ and $\bm{v}$. Derivation of the improved rate in our setting is left for our future research.

Finally, we consider that it is of great importance to develop a method to make the best use of the derived lower bound of the break down point.
In general, it is difficult to obtain data without outliers. Many robust data analysis methods have been proposed, like the method in this study. However, there is no method that allows an arbitrary number of outliers. It is of practical importance to understand how many outliers are allowed depending on the method used for the analysis and the nature of the data being analyzed. For example, consider analysis of sensor data in an industrial facility where the cost of producing products without an outlier is quite expensive, but it is reasonable to assume that throughput is extremely high by allowing a small number of outliers. In such a situation, it is possible that efficient manufacturing and experiments can be performed by estimating the LBBP in advance and then running the facility under a precision that is consistent with the estimated allowable outlier ratio. The breakdown point itself requires the ground-truth distribution, but our proposed lower bound can be calculated only with the observed dataset. This remains for future research.

\subsection*{Acknowledgement}
H.H. is supported by JST CREST JPMJCR1761, JSPS KAKENHI JP19K12111, and JP19K21686. The authors thank Professor Hironori Fujisawa for his useful comments on the preliminary version of this manuscript.

\section*{Appendix A: Proof of Theorem~\ref{theo:convergence}}
We give a proof of uniform stochastic convergence of Theorem~\ref{theo:convergence}. The theorem is based on the following assumptions, all of which are standard regularity conditions often placed for giving theoretical guarantee for non-parametric estimators.

\begin{enumerate}[label=A.\arabic*]
  \item $\displaystyle \left\{ \bm{X}_n \right\}_{n\in\mathbb{N}}$ are i.i.d.\label{assump1}
  \item $\displaystyle \text{E}( \left\lVert \bm{X} \right\rVert_2) < \infty$\label{assump2}
  \item $\displaystyle (m_0, \bm{v}_0) \coloneqq \argsup_{m\in\mathbb{R},\ \bm{v}\in\mathcal{S}^{d-1}} f_{\bm{v}^{\top}\bm{X}}(m)$.\\
        $\displaystyle \left|m_0\right| < \infty$. $M\coloneqq [-m_0,m_0]$.\label{assump3}
  \item $\displaystyle L_0 \coloneqq \sup_{m\in\mathbb{R},\ \bm{v}\in\mathcal{S}^{d-1}} f_{\bm{v}^{\top}\bm{X}}(m) < \infty$\label{assump4}
  \item \label{assump5} $^{\forall}\bm{v}\in\mathcal{S}^{d-1},\ f_{\bm{v}^{\top}\bm{X}}(u)$ is a differentiable function with respect to $u$. 
\[
\displaystyle c_4 \coloneqq \sup_{u\in\mathbb{R},\ \bm{v}\in\mathcal{S}^{d-1}} \left\lvert \frac{df_{\bm{v}^{\top}\bm{X}}(u)}{du} \right\rvert < \infty
\]
  \item $\phi(z)$ is a differentiable kernel function with respect to $z$.\label{assump6}
  \item $\displaystyle c_0 \coloneqq \sup_{u\in\mathbb{R}} \left\lvert \phi(u) \right\rvert < \infty,\ 
        c_1 \coloneqq \sup_{u\in\mathbb{R}} \left\lvert \frac{d\phi(u)}{du} \right\rvert < \infty,\\
        c_2 \coloneqq \int \phi(z)^2 dz < \infty,\ 
        c_3 \coloneqq \int \left|u\right|\phi(u)du < \infty$ \label{assump7}
  \item $\left\{h_n\right\}_{n\in\mathbb{N}}$ is a positive bandwidth sequence such that\\
        $\displaystyle \lim_{n\to\infty}h_n = 0,\quad \lim_{n\to\infty}\frac{n h_n}{\log n} = \infty$. \label{assump8}
\end{enumerate}

\textbf{Proof.} Because the set $M\times \mathcal{S}^{d-1}$ is compact, we can find a finite set $J_n \subset M\times \mathcal{S}^{d-1}$ and a function $\lambda_n: M\times \mathcal{S}^{d-1} \ni (m,\bm{v}) \mapsto \lambda_n(m,\bm{v}) = \left( \lambda_n^{(1)}\left(m,\bm{v}\right),\ \lambda_n^{(2)}\left(m,\bm{v}\right) \right) \in J_n$, which satisfy the following properties:
\begin{align}
  &0<^{\exists}L<\infty,\ \left| J_n \right| \leq Ln^{2(d+1)}
,\\
  &^{\forall}(m,\bm{v})\in M\times \mathcal{S}^{d-1},\ \left\| \lambda_n\left(m,\bm{v}\right) - \left(m,\bm{v} \right) \right\|_2 \leq n^{-2}
.
\end{align}
We use $R_n\left( m,\bm{v}\right) \coloneqq \frac{1}{n}\sum_{i=1}^{n}\phi_h(m-\bm{v}^{\top}\bm{X}_{i})$ and $R\left(m,\bm{v}\right) \coloneqq f_{\bm{v}^{\top}\bm{X}}(m)$ and when there is no fear of confusion, we write $\lambda_n^{(1)} = \lambda_n^{(1)}(m,\bm{v}),\ \lambda_n^{(2)} = \lambda_n^{(2)}(m,\bm{v})$. 

Now, we obtain the following inequality:
\begin{align}
\notag
\left| R_n\left(m,\bm{v}\right) - R\left( m,\bm{v} \right) \right|
  \leq & \left| R_n\left(m,\bm{v}\right) - R_n\left( \lambda_n\left( m,\bm{v} \right) \right) \right| \notag \\
  &
        + \left| R_n\left( \lambda_n\left(m,\bm{v}\right) \right) - \text{E}\left[ R_n\left( \lambda_n\left(m,\bm{v}\right) \right) \right] \right| \notag \\ 
        &  + \left| \text{E}\left[ R_n\left( \lambda_n\left( m,\bm{v} \right) \right) \right] 
         - \text{E}\left[ R_n\left( m,\bm{v} \right) \right] \right| \notag \\
        &+ \left| \text{E}\left[ R_n\left( m,\bm{v} \right) \right] - R\left( m,\bm{v} \right) \right|.
\label{append:thoe4.1_target}
\end{align}
Convergence of each supremum term in probability is a sufficient condition for this proof.

First, we prove the following property:
\begin{align*}
  \sup_{(m,\bm{v})\in M\times \mathcal{S}^{d-1}} \left| R_n\left(m,\bm{v}\right) - R_n\left( \lambda_n\left( m,\bm{v} \right) \right) \right| = o_p(1).
\end{align*}
We have
\begin{align*}
  &\left| R_n\left(m,\bm{v}\right) - R_n\left( \lambda_n\left( m,\bm{v} \right) \right) \right|
\leq \frac{1}{nh_n}\sum_{i=1}^{n} \left| \phi\left( \frac{m-\bm{v}^{\top}\bm{X}_i}{h_n} \right) - \phi\left( \frac{\lambda_n^{(1)} -\lambda_n^{(2)\top}\bm{X}_i }{h_n} \right) \right|
\\
  &\text{by mean-value theorem and \ref{assump7}}
\\
  &\leq \frac{c_1}{(h_n)^2} \frac{1}{n} \sum_{i=1}^{n} \left| m-\lambda_n^{(1)} - \left( \bm{v}-\lambda_n^{(2)} \right)^{\top}\bm{X}_i \right|
\leq \frac{c_1}{(h_n)^2} \frac{1}{n} \sum_{i=1}^{n} \left| m-\lambda_n^{(1)} \right| + \left| \left( \bm{v}-\lambda_n^{(2)} \right)^{\top}\bm{X}_i \right|
\\
  &\leq \frac{c_1\sqrt{2}}{(h_n)^2} \frac{1}{n} \left\| (m,\bm{v})-\lambda_n(m,\bm{v}) \right\|_2 \sum_{i=1}^{n} \max\left( 1,\left\| \bm{X}_i \right\|_2 \right)
\\
  &\leq \frac{c_1\sqrt{2}}{(h_n)^2} \frac{1}{n} \left\| (m,\bm{v})-\lambda_n(m,\bm{v}) \right\|_2 \sum_{i=1}^{n} \left( 1 + \left\| \bm{X}_i \right\|_2 \right)
\\
  &\quad= \frac{c_1\sqrt{2}}{(h_n)^2}\left\| (m,\bm{v})-\lambda_n(m,\bm{v}) \right\|_2
             + \frac{c_1\sqrt{2}}{(h_n)^2}\left\| (m,\bm{v})-\lambda_n(m,\bm{v}) \right\|_2 \left[ \frac{1}{n}\sum_{i=1}^{n} \left\| \bm{X}_i \right\|_2 \right]
\\
  &\therefore \sup \left| R_n(m,\bm{v})-R\left( \lambda_n\left( m,\bm{v} \right)\right) \right|
    \; \leq \frac{c_1\sqrt{2}}{(nh_n)^2}
             + \frac{c_1\sqrt{2}}{(nh_n)^2} \left[ \frac{1}{n}\sum_{i=1}^{n} \left\| \bm{X}_i \right\|_2 \right].
\end{align*}
From Markov's inequality, for all $\epsilon>0$, we have
\begin{align*}
\text{P}\left( \left| \frac{c_1\sqrt{2}}{(nh_n)^2} + \frac{c_1\sqrt{2}}{(nh_n)^2} \left[ \frac{1}{n}\sum_{i=1}^{n} \left\| \bm{X}_i \right\|_2 \right] \right| \geq \epsilon \right)
 \leq \frac{\frac{c_1\sqrt{2}}{(nh_n)^2} + \frac{c_1\sqrt{2}}{(nh_n)^2} \text{E}\left[ \left\| \bm{X}_i \right\|_2 \right]}{\epsilon}.
\end{align*}  
Then, we apply \ref{assump2} and \ref{assump8} to obtain
\begin{align*}
\lim_{n\to\infty}\text{P}\left( \left| \frac{c_1\sqrt{2}}{(nh_n)^2} + \frac{c_1\sqrt{2}}{(nh_n)^2} \left[ \frac{1}{n}\sum_{i=1}^{n} \left\| \bm{X}_i \right\|_2 \right] \right| \geq \epsilon \right) = 0
\end{align*}
for all $\epsilon >0$. 
This means that $\frac{c_1\sqrt{2}}{(nh_n)^2} + \frac{c_1\sqrt{2}}{(nh_n)^2} \left[ \frac{1}{n}\sum_{i=1}^{n} \left\| \bm{X}_i \right\|_2 \right] = o_p(1)$, which proves $\sup \left| R_n(m,\bm{v}) - R\left( \lambda_n\left(m,\bm{v} \right)\right) \right| = o_p(1)$.

Second, we prove the convergence of the second term of Eq.~\eqref{append:thoe4.1_target}. For simplicity, we use the following notation:
\begin{align*}
a_i\left( \lambda_n(m,\bm{v}) \right)
\coloneqq \phi_{h_n}\left( \lambda_n^{(1)} - \lambda_n^{(2)\top}\bm{X}_i \right) - \text{E}\left[ \phi_{h_n}\left( \lambda_n^{(1)} - \lambda_n^{(2)\top}\bm{X}_i \right) \right]
\end{align*}
This makes the second term simple.
\begin{align*}
\left| R_n\left( \lambda_n\left( m,\bm{v} \right) \right) - \text{E}\left[ R_n\left( \lambda_n\left( m,\bm{v} \right) \right) \right] \right|
= \left| \frac{1}{n}\sum_{i=1}^{n} a_i\left( \lambda_n(m,\bm{v}) \right) \right|.
\end{align*}
In order to apply Bernstein inequalities, it is necessary to reveal the upper bound, lower bound, mean and variance boundedness of $a_i (m,\bm{v} )$. The mean is equal to $0$ obviously.
Bounds are derived as follows:
\begin{align*}
  -\frac{c_0}{h_n} \leq a_i\left( m,\bm{v} \right) \leq \frac{2c_0}{h_n}
.
\end{align*}
The variance is bounded as follows:
\begin{align*}
\text{Var}\left[ a_i\left( m,\bm{v} \right) \right]
\leq \frac{1}{(h_n)^2} \text{E}\left[ \phi\left( \frac{m-\bm{v}^{\top}\bm{X}}{h_n} \right)^2 \right]
\quad= \frac{1}{(h_n)^2} \int \phi\left( \frac{m-\bm{v}^{\top}\bm{x}}{h_n} \right)^2 dP_{\bm{X}}(\bm{x}),
\end{align*}
where $P_{\bm{X}}$ is the measure of $\bm{X}$. Rewriting $g(y)\coloneqq \phi\left( \frac{m-y}{h_n} \right)^2$ and $Y(\bm{x}) \coloneqq \bm{v}^{\top} \bm{x}$, we have
\begin{align*}
\text{Var}\left[ a_i\left( m,\bm{v} \right) \right]    
 \leq &\frac{1}{(h_n)^2} \int \left( g\circ Y \right)(\bm{x}) dP_{\bm{X}}(\bm{x})
\\
  &= \frac{1}{(h_n)^2} \int g(y) dP_{\bm{v}^{\top}\bm{X}}(y)
= \frac{1}{(h_n)^2} \int g(y) f_{\bm{v}^{\top}\bm{X}}(y) dy
\\
  &= \frac{1}{h_n} \int \phi(z)^2 f_{\bm{v}^{\top}\bm{X}}(m - zh_n) dz
\leq \frac{c_2 L_0}{h_n}.
\end{align*}
We change the variable as $z=\frac{m-y}{h_n}$ and apply \ref{assump4} and \ref{assump7} in the last line. 
From assumption~\ref{assump1}, $\{ a_n ( m,\bm{v}) \}_{n\in\mathbb{N}}$ are i.i.d. Hence we use the notation $a(m,\bm{v})$ as a random variable that follows a certain distribution. The Bernstein inequality leads to
\begin{align}
  ^{\forall}\epsilon>0,\ \text{P}\left(\left| \frac{1}{n}\sum_{i=1}^{n} a_i( m,\bm{v} ) \right| \geq \epsilon \right)
\notag
  &< 2\exp\left\{ - \frac{n \epsilon^2}{ 2\left( \text{Var}\left[ a(m,\bm{v}) \right] + \frac{3c_0 \epsilon}{h_n} \right) } \right\}
\notag\\
  &\leq 2\exp\left\{ - \frac{ \left(n h_n \right) \epsilon^2 }{ 2\left( c_2 L_0 + 3c_0 \epsilon \right) } \right\}.
\label{eq:append_second_term}
\end{align}
The above implies that $^{\forall}\epsilon>0$,
\begin{align*}
  &\text{P}\left( \sup_{(m,\bm{v})\in M\times \mathcal{S}^{d-1}} \left| \frac{1}{n}\sum_{i=1}^{n} a_i\left( \lambda_n(m,\bm{v}) \right) \right| \geq \epsilon \right)
\\
  &= \text{P}\left( \max_{(m,\bm{v})\in J_n} \left| \frac{1}{n}\sum_{i=1}^{n} a_i(m,\bm{v}) \right| \geq \epsilon \right)
  \leq \text{P}\left( \bigcup_{(m,\bm{v})\in J_n} \left\{ \left| \frac{1}{n}\sum_{i=1}^{n} a_i(m,\bm{v}) \right| \geq \epsilon \right\} \right)
\\
  &\text{by subadditivity of probability measures,}
\\
  &\leq \sum_{(m,\bm{v})\in J_n} \text{P}\left( \left| \frac{1}{n}\sum_{i=1}^{n}a_i(m,\bm{v}) \right| \geq \epsilon \right).
\\
  &\text{The set $J_n$ is finite and inequality~\eqref{eq:append_second_term} imply:}
\\
  &\leq 2\exp\left\{ - \frac{ \left(n h_n \right) \epsilon^2 }{ 2\left( c_2 L_0 + 3c_0 \epsilon \right) } \right\} Ln^{2(d+1)}
\\
  &= 2L\exp\left\{ -(n h_n) \left[ \frac{\epsilon^2}{2(c_2L_0 + 2c_0 \epsilon)} - 2(d+1)\frac{\log n}{n h_n} \right] \right\},
  \\
   &\text{and from assumption~\ref{assump8}, we obtain},
\\
 &\lim_{n\to\infty} \text{P}\left( \sup_{(m,\bm{v})\in M\times \mathcal{S}^{d-1}} \left| \frac{1}{n}\sum_{i=1}^{n} a_i\left( \lambda_n(m,\bm{v}) \right) \right| \geq \epsilon \right) = 0,
\end{align*}
which proves $\sup \left| R_n\left( \lambda_n\left( m,\bm{v} \right) \right) - \text{E}\left[ R_n\left( \lambda_n\left( m,\bm{v} \right) \right) \right] \right| = o_p(1)$.

Next, we show the convergence of the third term of Eq.~\eqref{append:thoe4.1_target}.
\begin{align*}
  &\left| \text{E}\left[ R_n\left( \lambda_n(m,\bm{v}) \right) \right] - \text{E}\left[ R_n(m,\bm{v}) \right] \right|
\\
  &\leq \frac{1}{h_n} \int \left| \phi\left( \frac{\lambda_n^{(1)} - \lambda_n^{(2)\top}\bm{x}}{h_n} \right) - \phi\left( \frac{m-\bm{v}^{\top}\bm{x}}{h_n} \right) \right| dP_X(\bm{x})
\\
  &\text{by mean-value theorem and \ref{assump7}}
\\
  &\leq \frac{c_1}{(h_n)^2} \int \left\{ \left| \lambda_n^{(1)} - m \right| + \left| (\lambda_n^{(2)} - \bm{v})^{\top}\bm{x} \right| \right\} dP_X(\bm{x})
\\
  &\leq \frac{c_1 \sqrt{2}}{(h_n)^2} \left\| \lambda_n(m,\bm{v}) - (m,\bm{v}) \right\|_2 \int \max\left(1, \left\| \bm{x} \right\|_2\right) dP_X(\bm{x})
\\
  &\leq \frac{c_1 \sqrt{2}}{(h_n)^2} \left\| \lambda_n(m,\bm{v}) - (m,\bm{v}) \right\|_2
        + \frac{c_1}{(h_n)^2} \left\| \lambda_n(m,\bm{v}) - (m,\bm{v}) \right\|_2 \text{E}\left[ \left\| X \right\|_2 \right]
\\
  &\therefore \sup_{(m,\bm{v})\in M\times \mathcal{S}^{d-1}} \left| \text{E}\left[ R_n\left( \lambda_n(m,\bm{v}) \right) \right] - \text{E}\left[ R_n(m,\bm{v}) \right] \right|
\\
  &\leq \frac{c_1 \sqrt{2}}{(n h_n)^2} \left( 1+\text{E}\left[ \left\| \bm{X} \right\|_2 \right] \right)
\end{align*}
We note that $\frac{c_1 \sqrt{2}}{(n h_n)^2} \left( 1+\text{E}\left[ \left\| \bm{X} \right\|_2 \right] \right)$ depends only on $n$. From Markov's inequality, $^{\forall}\epsilon > 0$,
\begin{align*}
  &\text{P}\left( \left| \frac{c_1 \sqrt{2}}{(n h_n)^2} \left( 1+\text{E}\left[ \left\| \bm{X} \right\|_2 \right] \right) \right| \geq \epsilon \right) \leq \frac{c_1\sqrt{2}\left( 1+\text{E}\left[ \left\| \bm{X} \right\|_2 \right] \right)}{\epsilon(n h_n)^2}
\\
  &\lim_{n\to\infty} \text{P}\left( \left| \frac{c_1 \sqrt{2}}{(n h_n)^2} \left( 1+\text{E}\left[ \left\| \bm{X} \right\|_2 \right] \right) \right| \geq \epsilon \right) = 0,
\end{align*}
which proves that $\sup \left| \text{E}\left[ R_n\left( \lambda_n(m,\bm{v}) \right) \right] - \text{E}\left[ R_n(m,\bm{v}) \right] \right| = o_p(1)$.

Finally, we provide the proof of the convergence of the fourth term of Eq.~\eqref{append:thoe4.1_target}:
\begin{align*}
  &\left| \text{E}\left[ R_n\left( m,\bm{v} \right) \right] - R\left( m,\bm{v} \right) \right|
  = \left| \int \phi_{h_n}(m-\bm{v}^{\top} \bm{x}) dP_{\bm{X}}(\bm{x}) - f_{\bm{v}^{\top}\bm{X}}(m) \right|
\\
  &= \left| \int \phi_{h_n}(m-y)dP_{\bm{v}^{\top}\bm{X}}(y) - f_{\bm{v}^{\top}\bm{X}}(m) \right|
\\
  &= \left| \frac{1}{h_n}\int \phi\left(\frac{m-y}{h_n}\right) f_{\bm{v}^{\top}\bm{X}}(y)dy - f_{\bm{v}^{\top}\bm{X}}(m) \right|
\\
  &\text{setting $z=\frac{m-y}{h_n}$, it follows that}
\\
  &= \left| \int \phi(z) f_{\bm{v}^{\top}X}(m-z h_n) dz - f_{\bm{v}^{\top}X}(m) \right|
\\
  &\text{where $\int \phi(z)dz=1$ and hence that}
\\
  &\leq \int \phi(z) \left| f_{\bm{v}^{\top}\bm{X}}(m-z h_n) - f_{\bm{v}^{\top}\bm{X}}(m) \right| dz
\\
  &\text{by mean-value thoerem and \ref{assump5}, \ref{assump7}}
\\
  &\leq h_n c_3 c_4
\\
  &\therefore \sup_{(m,\bm{v})\in M\times \mathcal{S}^{d-1}} \left| \text{E}\left[ R_n\left( m,\bm{v} \right) \right] - R\left( m,\bm{v} \right) \right| \leq h_n c_3 c_4
\end{align*}
Assumption~\ref{assump8} leads to $\lim_{n\to\infty} h_n c_3 c_4 = 0$. Hence $\sup \left| \text{E}\left[ R_n\left( m,\bm{v} \right) \right] - R\left( m,\bm{v} \right) \right| = o_p(1)$ holds.

Putting these results together, we obtain $\sup \left| R_n\left(m,\bm{v}\right) - R\left( m,\bm{v} \right) \right| = o_p(1)$.

\section*{Appendix B: Proof of Theorem~\ref{theo:convergencerate}}

We show that Theorem~\ref{theo:convergencerate} holds. Let us denote
\begin{align*}
  h_n &\coloneqq
    M_0 n^{-\frac{1}{k}}, \quad\text{where}\quad M_0>0,\quad k>4
,\\
  \delta_n^{(1)} &\coloneqq
    \sup_{(m,\bm{v})\in M\times \mathcal{S}^{d-1}} \left| R_n\left( m,\bm{v} \right) - R_n\left( \lambda_n(m,\bm{v}) \right) \right|
,\\
  \delta_n^{(2)} &\coloneqq
    \sup_{(m,\bm{v})\in M\times \mathcal{S}^{d-1}} \left| R_n\left( \lambda_n(m,\bm{v}) \right) - \text{E}\left[ R_n\left( \lambda_n(m,\bm{v}) \right) \right] \right|
,\\
  \delta_n^{(3)} &\coloneqq
    \sup_{(m,\bm{v})\in M\times \mathcal{S}^{d-1}} \left| \text{E}\left[ R_n\left( \lambda_n(m,\bm{v}) \right) \right] - \text{E}\left[ R_n\left( m,\bm{v} \right) \right] \right|
,\\
  \delta_n^{(4)} &\coloneqq
    \sup_{(m,\bm{v})\in M\times \mathcal{S}^{d-1}} \left| \text{E}\left[ R_n\left( m,\bm{v} \right) \right] - R\left( m,\bm{v} \right) \right|
.
\end{align*}

In Appendix~A, we see that
\begin{align*}
  &\delta_n^{(1)} \leq
    (n h_n)^{-2} c_1 \sqrt{2} \left[ 1+\frac{1}{n}\sum_{i=1}^{n} \left\| \bm{X}_i \right\|_2 \right]
,\\
  &^{\forall}\delta > 0, \quad
    P\left( (n h_n)^{-2} c_1 \sqrt{2} \left[ 1+\frac{1}{n}\sum_{i=1}^{n} \left\| \bm{X}_i \right\|_2 \right] > \epsilon \right) \leq
      \epsilon^{-1} (n h_n)^{-2} c_1 \sqrt{2} \left\{ 1+\text{E}\left[ \left\| \bm{X} \right\|_2 \right] \right\}
\end{align*}
is satisfied. Let $\epsilon = c_1 \sqrt{2} \left\{ 1+\text{E}\left[ \left\| \bm{X} \right\|_2 \right] \right\} h_n$. Then,
\begin{align}
  \sum_{n\in\mathbb{N}} P(\delta_n^{(1)} > M_1 n^{-\frac{1}{k}} ) \leq
    M_0^{-3} \sum_{n\in\mathbb{N}} \frac{1}{n^{2-\frac{3}{k}}} < \infty
,\label{appendB:bound1}
\end{align}
where $M_1 \coloneqq c_1 \sqrt{2} \left\{ 1+\text{E}\left[ \left\|\bm{X}\right\|_2 \right] \right\} M_0$.

The quantity $\frac{1}{n}\phi_{h_n}\left( \lambda_n^{(1)} - \lambda_n^{(2)\top} \bm{X} \right)$ is bounded as
\begin{align*}
  0 \leq
  \frac{1}{n}\phi_{h_n}\left( \lambda_n^{(1)} - \lambda_n^{(2)\top} \bm{X} \right) \leq
  (n h_n)^{-1} c_0
.
\end{align*}
Therefore, by Hoeffding's inequality, for any $\epsilon > 0$,
\begin{align*}
  &P\left( \left| \frac{1}{n}\sum_{i=1}^{n} \phi_{h_n}\left( \lambda_n^{(1)} - \lambda_n^{(2)\top}\bm{X} \right) - \text{E}\left[ \phi_{h_n}\left( \lambda_n^{(1)} - \lambda_n^{(2)\top}\bm{X} \right) \right] \right| \geq \epsilon \right)
    \leq 2\exp\left\{ -2c_0^{-2} \epsilon^2 n h_n^2 \right\}
,\\
  &\therefore
  P\left( \delta_n^{(2)} \geq \epsilon \right) \leq
    2L n^{2(d+1)} \exp\left\{ -2c_0^{-2} \epsilon^2 n h_n^2 \right\}, \quad \text{let $\epsilon = \frac{c_0}{M\sqrt{2}} h_n$}
,\\
  &\therefore
  \sum_{n\in\mathbb{N}} P\left( \delta_n^{(2)} \geq M_2 n^{-\frac{1}{k}} \right) \leq
    2L \sum_{n\in\mathbb{N}} n^{2(d+1)} \exp\left\{ -n^{1-\frac{4}{k}} \right\}, \quad \text{where}\quad M_2 \coloneqq \frac{c_0}{\sqrt{2}}
.
\end{align*}
Let us consider $g(z) \coloneqq z^{2(d+1)} \exp\left\{ -z^{1-\frac{4}{k}} \right\}$. For any $z \geq z_0 \coloneqq \left\{ \frac{2(d+1)}{1-\frac{4}{k}} \right\}^{\frac{1}{1-\frac{4}{k}}}$, $g(z)$ is monotonically non-increasing. In addition,
\begin{align*}
  \int_{z_0}^{\infty} g(z) dz = \frac{k}{k-4} \Gamma\left( \frac{k}{k-4}(2d+3), z_0^{1-\frac{4}{k}} \right)
,
\end{align*}
where $\Gamma(\alpha,\beta) \coloneqq \int_{\beta}^{\infty} z^{\alpha-1} e^{-z} dz$ denotes the upper incomplete gamma function.
Now we see that the infinite series $\sum_{n\in\mathbb{N}} g(n)$ is bounded by the integral test, hence
\begin{align}
  \sum_{n\in\mathbb{N}} P\left( \delta_n^{(2)} \geq M_2 n^{-\frac{1}{k}} \right) < \infty
.\label{appendB:bound2}
\end{align}

Recalling that the inequality $\delta_n^{(3)} \leq (n h_n)^{-2} c_1 \sqrt{2} \left\{ 1+\text{E}\left[ \left\| \bm{X} \right\|_2 \right] \right\}$ is satisfied, we obtain $\delta_n^{(3)} = o(h_n)$ and
\begin{align}
  \sum_{n\in\mathbb{N}}P\left( \delta_n^{(3)} > M_3 n^{-\frac{1}{k}} \right) < \infty, \quad\text{where}\quad M_3 \coloneqq M_0
.\label{appendB:bound3}
\end{align}

The forth team~$\delta_n^{(4)}$ satisfies with $\delta_n^{(4)} \leq c_3 c_4 h_n$, which gives
\begin{align}
  \sum_{n\in\mathbb{N}}P\left( \delta_n^{(4)} > M_4 n^{-\frac{1}{k}} \right) < \infty,\quad \text{where}\quad M_4 \coloneqq c_3 c_4
.\label{appendB:bound4}
\end{align}

Finally, let $M \coloneqq 4 \max\left( M_1,M_2,M_3,M_4 \right)$. Then,
\begin{align*}
  P\left( \sup \left| R_n\left( m,\bm{v} \right) - R(m,\bm{v}) \right| > M n^{-\frac{1}{k}} \right) &\leq
    \sum_{j=1}^{4} P\left( \delta_n^{(j)} > \frac{M}{4} n^{-\frac{1}{k}} \right)
\\
    &\leq \sum_{j=1}^{4} P\left( \delta_n^{(j)} > M_j n^{-\frac{1}{k}} \right)
.
\end{align*}
The inequalities~\eqref{appendB:bound1},~\eqref{appendB:bound2},~\eqref{appendB:bound3},~\eqref{appendB:bound4} leads that $\sum_{n\in\mathbb{N}} P\left( \sup \left| R_n\left( m,\bm{v} \right) - R(m,\bm{v}) \right| > M n^{-\frac{1}{k}} \right)$ is bounded, which means $\sup \left| R_n\left(m,\bm{v}\right) - R(m,\bm{v}) \right| = \mathcal{O}_{\text{a.co.}}(n^{-\frac{1}{k}})$.

\section*{Appendix C: Proof of Theorem~\ref{theo:IF}}
We derive the influence function of $\hat{\bm{w}}_k$ in Theorem~\ref{theo:IF}. In this section, $\hat{\bm{w}}_k(F_{\bm{Y}})$ denotes the estimator which is an optimal solution of problem~(4) given the probability measure $F_{\bm{Y}}$.

For $k \geq 2$, the optimization problem~(4) results in the Lagrangian function
\begin{align*}
  \mathcal{L}(\bm{w}, \gamma, \alpha_1,\dots ,\alpha_{k-1})
= \int \phi_h(\bm{w}^{\top} \bm{y})dF_{\bm{Y}}(\bm{y}) + \gamma (1-\bm{w}^{\top}\bm{w}) + \sum_{l=1}^{k-1} \alpha_l \bm{w}^{\top}\hat{\bm{w}}_{l}.
\end{align*}
The estimator $\hat{\bm{w}}_{k}$ maximizes the Lagrangian function and needs to satisfy
\begin{align*}
  &\frac{\partial \mathcal{L}}{\partial \bm{w}} = \bm{0}
,\quad
  \frac{\partial \mathcal{L}}{\partial \gamma} = 0
,\quad
  \frac{\partial \mathcal{L}}{\partial \alpha_l} = 0
,\quad
  l=1\dots k-1
.\\
&\Rightarrow\ \left\{\begin{aligned}
  &\alpha_l = -\hat{\bm{w}}_l(F_{\bm{Y}})^{\top}\bm{\psi}(\hat{\bm{w}}_k(F_{\bm{Y}}), F_{\bm{Y}})
,\quad
  l=1\dots k-1
,\\
  &2\gamma = \hat{\bm{w}}_{k}(F_{\bm{Y}})^{\top}\bm{\psi}(\hat{\bm{w}}_k(F_{\bm{Y}}), F_{\bm{Y}})
.
\end{aligned}\right.
\end{align*}
The Lagrangian function satisfies
\begin{align}
\left.\frac{\partial \mathcal{L}}{\partial \bm{w}} \right|_{\hat{\bm{w}}_k(F_{\bm{Y}})}
= \left[ I - \sum_{l=1}^{k}\hat{\bm{w}}_l(F_{\bm{Y}})\hat{\bm{w}}_l(F_{\bm{Y}})^{\top} \right] \bm{\psi}(\hat{\bm{w}}_k(F_{\bm{Y}}), F_{\bm{Y}})
   = \bm{0}
\label{eq:append_lagrangian}.
\end{align}
Suppose that the probability measure is $F_{\bm{Y}} = (1-\epsilon) F + \epsilon \Delta_u$, where $F, \Delta_{\bm{u}}$ denote the true probability measure and the Dirac measure at $\bm{u}\in\mathbb{R}^d$, respectively. Then, the influence function $\text{IF}(\bm{u};\hat{\bm{w}}_k, F)$ is expressed as $\left.\frac{\partial}{\partial \epsilon} \hat{\bm{w}}_k(F_Y) \right|_{\epsilon=0}$. Differentiation of each term in \eqref{eq:append_lagrangian} yields
\begin{align*}
  &\left.\frac{d}{d\epsilon} \bm{\psi}(\hat{\bm{w}}_k(F_{\bm{Y}}), F_{\bm{Y}})\right|_{0}
   = -\bm{d}_{k} + \bm{B}_{k} \left.\frac{d}{d\epsilon} \hat{\bm{w}}_k(F_{\bm{Y}}) \right|_{0}
,\\
  &\frac{d}{d\epsilon} \left\{ I - \sum_{l=1}^{k} \hat{\bm{w}}_l(F_{\bm{Y}}) \hat{\bm{w}}_l(F_{\bm{Y}})^{\top} \right\}_{0}
= -\sum_{l=1}^{k} \left[ \left. \frac{d}{d\epsilon}\hat{\bm{w}}_l(F_{\bm{Y}}) \right|_{0} \hat{\bm{w}}_l(F)^{\top} + \hat{\bm{w}}_l(F) \left. \frac{d}{d\epsilon}\hat{\bm{w}}_l(F_Y) \right|_{0}^{\top} \right]
.
\end{align*}
 respectively.
Therefore, the derivative of \eqref{eq:append_lagrangian} yields
\begin{align*}
  &\left.\frac{d}{d\epsilon} \left\{ \left[ I - \sum_{l=1}^{k}\hat{\bm{w}}_l(F_{\bm{Y}})\hat{\bm{w}}_l(F_{\bm{Y}})^{\top} \right] \bm{\psi}\left( \hat{\bm{w}}_k(F_{\bm{Y}}), F_{\bm{Y}} \right) \right\}\right|_{0} = \bm{0}
,
\end{align*}
\begin{align*}
  \Rightarrow \left. \frac{d}{d\epsilon} \hat{\bm{w}}_k(F_{\bm{Y}}) \right|_{0}
   &= \left( \bm{A}_k \bm{B}_k - \bm{C}_k \right)^{-1}
 \times \left[ \bm{A}_k \bm{d}_k - \sum_{l=1}^{k-1} \bm{C}_l \left.\frac{d}{d\epsilon} \hat{\bm{w}}_l(F_{\bm{Y}}) \right|_{0} \right]
.
\end{align*}
The case of $k=1$ is derived in the same way.

\section*{Appendix D: Proof of Theorem~\ref{theo:inequality}}
This section is devoted to the proof of an LBBP in Theorem~\ref{theo:inequality}. Recall that $\phi(z)=\exp\left(-z^2/2\right)/\sqrt{2\pi}$.

\textbf{Proof}\mbox{}\\ 
From
\begin{align*}
  &h\sqrt{2\pi}\phi_h(z) = \exp\left\{ -\frac{1}{2}\left( \frac{z}{h} \right)^2 \right\}
  \leq e^0 = 1,
\end{align*}
the inequality $^{\forall}z\in\mathbb{R},\ h\sqrt{2\pi}\phi_h(z) \leq 1$ holds. 
Next, Theorem~4.3 requires us to prove that
\begin{align*}
  ^{\forall}b\in\mathbb{N},\quad
    & b < M_a(\hat{\bm{w}}_1(Y_a)) - M_a^{*}(\hat{\bm{w}}_1(Y_a))
\\
    &\quad \Rightarrow ^{\forall}Y_b\subset \mathbb{R}^d,\ \hat{\bm{w}}_1(Y_a\cup Y_b)^{\top} \hat{\bm{w}}_1(Y_a) \neq 0
.
\end{align*}
To derive the contradiction, suppose that $^{\exists}m_0\in\mathbb{N}$ where $m_0 < M_a(\hat{\bm{w}}_1(Y_a)) - M_a^{*}(\hat{\bm{w}}_1(Y_a)),\; ^{\exists}Y_{m_0} = \left\{ \bm{x}_{a+i} \right\}_{i=1}^{m_0} \subset \mathbb{R}^d,\; \hat{\bm{w}}_1(Y_a \cup Y_{m_0})^{\top}\hat{\bm{w}}_1(Y_a)=0$. This yields
\begin{align*}
  h\sqrt{2\pi}\sum_{i=1}^{a}\phi_h\left( \hat{\bm{w}}_1(Y_a\cup Y_{m_0})^{\top}\bm{x}_i \right) \leq M_a^{*}(\hat{\bm{w}}_1(Y_a))
.
\end{align*}
Hence, it yields
\begin{align*}
&h\sqrt{2\pi}\sum_{i=1}^{a+m_0} \phi_h\left( \hat{\bm{w}}_1(Y_a\cup Y_{m_0})^{\top}\bm{x}_i \right)
  \leq M_a^{*}(\hat{\bm{w}}_1(Y_a)) + m_0
< M_a(\hat{\bm{w}}_1(Y_a))\\
&= h\sqrt{2\pi}\sum_{i=1}^{a}\phi_h\left( \hat{\bm{w}}_1(Y_a)^{\top}\bm{x}_i \right)
\leq h\sqrt{2\pi}\sum_{i=1}^{a+m_0} \phi_h\left( \hat{\bm{w}}_1(Y_a)^{\top}\bm{x}_i \right)
\\
  &\therefore \sum_{i=1}^{a+m_0} \phi_h\left( \hat{\bm{w}}_1(Y_a\cup Y_{m_0})^{\top}\bm{x}_i \right)
   < \sum_{i=1}^{a+m_0} \phi_h\left( \hat{\bm{w}}_1(Y_a)^{\top}\bm{x}_i \right)
.
\end{align*}
This is a contradiction, because $\hat{\bm{w}}_1(Y_a\cup Y_{m_0})$ is an optimal solution that maximizes $\sum_{i=1}^{a+m_0} \phi_h\left( \bm{w}^{\top}\bm{x}_i \right)$.

%
%
%
\section*{Appendix E: Algorithmic Detail}
In this section, we describe the algorithmic details of the optimization for the proposed MPCA method. The objective function of problem~\eqref{eq:optim1} is 
 \begin{align*}
    \begin{aligned}
    &(\hat{m}_{k},\hat{\bm{v}}_{k})
      =  \argmax_{m\in\mathbb{R},\ \bm{v}\in \mathcal{S}^{d-1} }\quad \frac{1}{N}\sum_{i=1}^{N}\phi_h\left( m-\bm{v}^{\top}\bm{x}_{i} \right)
  ,\\ &
    \quad \text{s.t.}\quad
        \bm{v}^{\top}\hat{\bm{v}}_{j}=0, \quad j=1\dots k-1
      .
    \end{aligned}
\end{align*}
which non-convex with respect to $m,\bm{v}$ and hence, it is difficult to obtain the global optimum. We introduce the GRID algorithm~\citep{Croux2007} to obtain a good initial point, and then an algorithm that converges to a local optima given an initial point is proposed.

\subsection*{Selection of Initial Point}
Since the objective function in the problem~\eqref{eq:optim1} is non-convex, it is important to select a good initial solution. We adopt the GRID algorithm proposed in the projection-pursuit robust PCA method~\citep{Croux2007}. The GRID algorithm searches the unit sphere for a better direction by multi-step grid search-like approach. \citet{Croux2007} empirically shows that the GRID algorithm is able to evaluate most of possible directions efficiently.

We explain the GRID algorithm by an example of estimating the classical PCA projection direction. We introduce the notation$\text{Var}(\bm{a})=\text{Var}(\bm{a}^{\top}\bm{x}_{1}, \dots , \bm{a}^{\top}\bm{x}_{N})$. When $d=2$, the estimate $\hat{\bm{a}}_{1}$ of $\text{PC}_{1}$ is obtained by finding $\theta \in \left[ -\pi/2, \pi/2 \right)$, which maximizes $\text{Var} \left( 
( \cos(\theta)\; \sin(\theta) )^{\top} \right)$. The GRID algorithm evaluates the objective function $\text{Var}(\cdot)$ at $N_g$ grid points $\{ (-1/2+j/N_g)\pi \}_{j=0}^{N_g-1}$ in $\left[-\pi/2,\pi/2\right)$. By using the maximizer $\hat{\theta}_1$ for $\text{Var}(\cdot)$ among the grid points, the solution is updated to $( \cos(\hat{\theta}_1)\ \sin(\hat{\theta}_1) )^{\top}$. We then consider $N_g$ grid points $\{ \hat{\theta}_1+ (-1/2^2+j/2N_g )\pi \}_{j=0}^{N_g-1}$ on $\left[ \hat{\theta}_1-\pi/2^2, \hat{\theta}_1+\pi/2^2  \right)$ to find the maximizer $\hat{\theta}_2$, and update the solution by $(\cos(\hat{\theta}_2)\ \sin(\hat{\theta}_2) )^{\top}$. We iterate those procedures $N_c$ times to obtain the initial solution of the problem~\eqref{eq:optim1}. When $d>2$, we apply the same algorithm for $d=2$ on the two-dimensional subspaces spanned by current solution $\hat{\bm{a}}$ and a basis $\bm{e}_i,\; i=1\dots d$. Details of the algorithm are described in ~\citep{Croux2007}. 

\subsection*{Optimization Problem on a Manifold}
\label{sec:optimizationproblemonamanifold}
In this subsection, we propose an algorithm to find a local optimal solution of problem~\eqref{eq:optim1} given an initial point $\bm{v}^{(0)}$. Since the simultaneous optimization of $m$ and $\bm{v}$ is difficult, we solve the problem~\eqref{eq:optim1} in an iterative manner.

\subsubsection{Optimize Model with Fixed Projection Axes}
\label{sec:optimize_m_with_fixed_vl}
We update $m$ by solving the following unconstrained optimization problem with respect to $m$:
\begin{align}
  m^{(l)}
    &=\argmax_{m\in\mathbb{R}} \; \frac{1}{N}\sum_{i=1}^{N} \phi_h(m-\bm{v}^{(l)\top}\bm{x}_i)
.
\end{align}
The solution $m$ of the problem is regarded as an estimate of mode, and the half-sample mode method~\citep{Bickel2006} can be used as a fast and reasonable estimator for this problem. In this study, we use the estimate $m_{(0)}$ obtained by the half-sample mode method as an initial point and apply the Newton method to obtain a higher precision solution. The concrete update formula is given as follows:
\begin{align*}
  &m_{(j+1)}
    =  m_{(j)} - \frac{F(m_{(j)})}{\frac{dF(m_{(j)})}{dm}}
,\\
&\quad 
\left\{
  \begin{aligned}
    &F(m)=\sum_{i=1}^{N}(m-\bm{v}^{(l)\top}\bm{x}_{i}) \phi_h(m-\bm{v}^{(l)\top}\bm{x}_{i})
  ,\\
    &\frac{dF(m)}{dm}
    =\sum_{i=1}^{N} \left\{ 1-\left( \frac{m-\bm{v}^{(l)\top}\bm{x}_{i}}{h} \right)^2 \right\} \phi_h(m-\bm{v}^{(l)\top}\bm{x}_{i})
  .
  \end{aligned}
  \right.
\end{align*}

\subsubsection*{Optimize Projection Axis with Fixed Mode}
Consider the following optimization problem:
\begin{align}
  \begin{aligned}
    &\max_{\bm{v}\in\mathcal{S}^{d-1}} \quad \log \left[ \frac{1}{N} \sum_{i=1}^{N}\phi_h\left( m^{(l)}-\bm{v}^{\top}\bm{x}_{i} \right) \right]
\quad 
\text{s.t.}\quad 
      \bm{v}^{\top}\hat{\bm{v}}_{j}=0, \quad j=1\dots k-1.
  \end{aligned}
\end{align}
In this problem, $\frac{1}{N} \sum_{i=1}^{N}\phi_h\left( m^{(l)}-\bm{v}^{\top}\bm{x}_i \right)$ is non-negative and we can take the logarithm of the objective. By Jensen's inequality, we obtain
\begin{align}
  &\log \left[ \frac{1}{N}\sum_{i=1}^{N}\phi_h(m^{(l)}-\bm{v}^{\top}\bm{x}_{i}) \right]
  \geq \sum_{i=1}^{N} q_{i}^{(l)} \log \phi_h(m^{(l)}-\bm{v}^{\top}\bm{x}_{i})
        -\log N
        -\sum_{i=1}^{N}q_{i}^{(l)}\log q_{i}^{(l)},
\end{align}
where $q_{i}^{(l)}=\frac{\phi_h(m^{(l)}-\bm{v}^{(l)\top}\bm{x}_{i})}{\sum_{j=1}^{N}\phi_h(m^{(l)}-\bm{v}^{(l)\top}\bm{x}_{j})}$. Then, as a relaxation of the original problem~\eqref{eq:optimize_v}, we consider the following problem and update the estimate by its solution:
\begin{align}
  \begin{aligned}
    &\bm{v}^{(l+1)}
      =  \argmax_{\bm{v}\in\mathcal{S}^{d-1}} \; \sum_{i=1}^{N} q_{i}^{(l)} \log \phi_h(m^{(l)}-\bm{v}^{\top}\bm{x}_{i})
    \quad 
    \text{s.t.}\quad
        \bm{v}^{\top}\hat{\bm{v}}_{j}=0, \quad j=1\dots k-1.
  \end{aligned}
\end{align}
We note that $\phi_h(z)=\phi\left(z/h\right)/h,\ \phi(z)=\exp\left(-z^2/2\right)/\sqrt{2\pi}$ and hence, the above problem is equivalent to the following problem:
\begin{align}
  \begin{aligned}
    &\bm{v}^{(l+1)}
      =  \argmin_{\bm{v}\in\mathcal{S}^{d-1}} \quad \sum_{i=1}^{N} q_{i}^{(l)} (m^{(l)}-\bm{v}^{\top}\bm{x}_{i})^2
   \quad
   \text{s.t.}\quad
        \bm{v}^{\top}\hat{\bm{v}}_{j}=0, \quad j=1\dots k-1.
  \end{aligned}
\end{align}
We set $G^{(l)}(\bm{v})=\sum_{i=1}^{N} q_{i}^{(l)}(m^{(l)}-\bm{v}^{\top}\bm{x}_{i})^2$ henceforth.

Since the domain of $\bm{v}\in\mathbb{R}^d$ is restricted to $\bm{v}\in\mathcal{S}^{d-1}$, the constrained optimization problem~\eqref{eq:optimize_v3} is formulated as an unconstrained optimization problem on a manifold $\mathcal{S}^{d-1}$. There are many sophisticated methods for dealing with optimization problems on special manifolds~\citep{Absil2007}. In our problem, the manifold $\mathcal{S}^{d-1}$ is a simple set and we can explicitly calculate its local coordinate $\bm{\varphi}_{0}:\mathcal{S}^{d-1}\setminus\left\{-\bm{v}_{0}\right\}\to\mathbb{R}^{d-k}$ as follows:
\begin{enumerate}
  \item Set $\bm{v}_{0}\in\mathcal{S}^{d-1}$ so that $j=1\dots k-1,\ \bm{v}_{0}^{\top}\hat{\bm{v}}_{j}=0$.
  \item Set $\bm{U}=(\bm{u}_1\ \dots \ \bm{u}_{d-k})\in\mathbb{R}^{d\times (d-k)}$ so that $\bm{U}^{\top} \bm{U} = I,\; \bm{U}^{\top}\bm{v}_{0} = \bm{0},\; j=1\dots k-1,\; \bm{U}^{\top} \hat{\bm{v}}_j = \bm{0}$.
  \item Let $\bm{\varphi}_{0}:\mathcal{S}\setminus \left\{-\bm{v}_{0}\right\}\ni \bm{v}\mapsto \bm{\varphi}_{0}(\bm{v})=\frac{1}{1+\bm{v}_{0}^{\top}\bm{v}}\bm{U}^{\top}\bm{v} \in \mathbb{R}^{d-k} $, which leads to
   $\bm{\varphi}_{0}^{-1}:\mathbb{R}^{d-k}\ni \bm{\beta}\mapsto \bm{\varphi}_{0}^{-1}(\bm{\beta})=\frac{1}{1+\bm{\beta}^{\top}\bm{\beta}}(2\bm{U}\bm{\beta} + (1-\bm{\beta}^{\top}\bm{\beta})\bm{v}_{0})\in \mathcal{S}^{d-1}$.
\end{enumerate}
We used the local coordinate defined above to solve the following unconstrained problem:
\begin{align}
  \bm{\beta}^{(l+1)}
    =  \argmin_{\bm{\beta}\in\mathbb{R}^{d-k}} \quad G^{(l)}(\bm{\varphi}_{0}^{-1}(\bm{\beta}))
\tag{9}
\end{align}
is shown to be equivalent to solving~\eqref{eq:optimize_v3}.

The following lemma ensures that we can solve the unconstrained problem~\eqref{eq:optimize_v4} instead of the constrained problem~\eqref{eq:optimize_v3}. In practice, we estimate $\frac{\partial}{\partial \bm{\beta}}G(\bm{\varphi}_{0}^{-1}(\bm{\beta}))$ and we can use the gradient method or conjugate gradient method to obtain the solution.

\begin{lemm}
\label{lemm:homeomorphic}
  $M = \left\{ \bm{v}\in\mathcal{S}^{d-1} \middle| \begin{aligned} &l=1\dots k-1, \bm{v}^{\top} \hat{\bm{v}}_{l} = 0,\\ &v \neq -\bm{v}_{0} \end{aligned} \right\}$ is homeomorphic to $\mathbb{R}^{d-k}$, and $\varphi_{0}$ is the homeomorphism.
\end{lemm}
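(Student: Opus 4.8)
The plan is to recognize the map $\bm{\varphi}_{0}$ as nothing but the stereographic projection of a sphere, restricted to $M$, with projection pole $-\bm{v}_{0}$.

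First I would fix the linear-algebraic picture. The vectors $\hat{\bm{v}}_{1},\dots,\hat{\bm{v}}_{k-1}$ are mutually orthogonal unit vectors, so $W \coloneqq \{\bm{v}\in\mathbb{R}^{d}\mid \bm{v}^{\top}\hat{\bm{v}}_{l}=0,\ l=1\dots k-1\}$ is a linear subspace of dimension $d-k+1$, and by the construction of $\bm{v}_{0}$ and $\bm{U}$ the family $\{\bm{v}_{0},\bm{u}_{1},\dots,\bm{u}_{d-k}\}$ is an orthonormal basis of $W$. Hence every $\bm{v}\in W$ is uniquely written $\bm{v}=t\bm{v}_{0}+\bm{U}\bm{s}$ with $t=\bm{v}_{0}^{\top}\bm{v}\in\mathbb{R}$ and $\bm{s}=\bm{U}^{\top}\bm{v}\in\mathbb{R}^{d-k}$, and $\norm{\bm{v}}_{2}^{2}=t^{2}+\norm{\bm{s}}_{2}^{2}$. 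In these coordinates $M$ is exactly $\{(t,\bm{s})\mid t^{2}+\norm{\bm{s}}_{2}^{2}=1\}\setminus\{(-1,\bm{0})\}$, i.e.\ a $(d-k)$-dimensional sphere with one point deleted.

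Next I would rewrite $\bm{\varphi}_{0}$ and its claimed inverse in these coordinates. Since $\bm{v}_{0}^{\top}\bm{v}=t$ and $\bm{U}^{\top}\bm{v}=\bm{s}$, we get $\bm{\varphi}_{0}(\bm{v})=\bm{s}/(1+t)$, the textbook stereographic projection from the pole $t=-1$. The denominator is strictly positive on $M$: for a unit vector $\bm{v}$ one has $\bm{v}_{0}^{\top}\bm{v}\ge -1$ by Cauchy--Schwarz, with equality only when $\bm{v}=-\bm{v}_{0}$, which is excluded from $M$. Conversely, the formula $\bm{\varphi}_{0}^{-1}(\bm{\beta})=\frac{1}{1+\bm{\beta}^{\top}\bm{\beta}}\bigl(2\bm{U}\bm{\beta}+(1-\bm{\beta}^{\top}\bm{\beta})\bm{v}_{0}\bigr)$ corresponds to $t=(1-\norm{\bm{\beta}}_{2}^{2})/(1+\norm{\bm{\beta}}_{2}^{2})$ and $\bm{s}=2\bm{\beta}/(1+\norm{\bm{\beta}}_{2}^{2})$; a one-line computation gives $t^{2}+\norm{\bm{s}}_{2}^{2}=1$ and $t\neq-1$, so the image lies in $M$, and the denominator $1+\bm{\beta}^{\top}\bm{\beta}\ge 1$ so this map is defined on all of $\mathbb{R}^{d-k}$.

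Finally I would verify that the two maps are mutually inverse --- a routine substitution, using $1+\norm{\bm{s}/(1+t)}_{2}^{2}=2/(1+t)$ which follows from $t^{2}+\norm{\bm{s}}_{2}^{2}=1$, and the analogous identity in the other direction --- and that both are continuous, being rational maps with everywhere strictly positive denominators on their respective domains. A continuous bijection with continuous inverse is a homeomorphism, which is the assertion. I do not expect a genuine obstacle here; the only point requiring care is the bookkeeping around the denominators --- establishing $1+\bm{v}_{0}^{\top}\bm{v}>0$ \emph{strictly} on $M$ via the equality case of Cauchy--Schwarz, and checking that $\bm{\varphi}_{0}^{-1}(\bm{\beta})$ actually satisfies all $k-1$ orthogonality constraints and the unit-norm constraint, i.e.\ lands back in $M$ and not merely in $\mathcal{S}^{d-1}$.
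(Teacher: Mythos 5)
Your proposal is correct, and it rests on the same structural observation as the paper's proof: that $\{\bm{v}_{0},\bm{u}_{1},\dots,\bm{u}_{d-k}\}$ is an orthonormal basis of the constraint subspace, so that every $\bm{v}\in M$ decomposes uniquely as $t\bm{v}_{0}+\bm{U}\bm{s}$ with $t^{2}+\norm{\bm{s}}_{2}^{2}=1$ and $t\neq -1$ (the paper states exactly this as its Eq.~\eqref{lemm:homeomorphic_prop1}). Where you diverge is in the verification stage, and your route is the more economical one. The paper proves injectivity and surjectivity separately by coordinate manipulation (deriving $(1+\beta_{2})(\alpha_{2}-\beta_{2})=0$ from the norm constraints for injectivity, and exhibiting a preimage for surjectivity), and then establishes continuity of both $\varphi_{0}$ and $\varphi_{0}^{-1}$ by explicit $\epsilon$--$\delta$ estimates on the scalar factors $1/(1+\bm{v}_{0}^{\top}\bm{w})$ and $1+\bm{\beta}^{\top}\bm{\beta}$. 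You instead identify the map as the stereographic projection from the pole $-\bm{v}_{0}$, get bijectivity for free by checking that the two displayed formulas are mutually inverse (using the identity $1+\norm{\bm{s}/(1+t)}_{2}^{2}=2/(1+t)$), and dispose of continuity by noting that both maps are rational in the coordinates with denominators bounded away from zero on their domains --- $1+\bm{v}_{0}^{\top}\bm{v}>0$ on $M$ by the equality case of Cauchy--Schwarz, and $1+\bm{\beta}^{\top}\bm{\beta}\geq 1$ everywhere. You also correctly flag the one point the verification must not skip, namely that $\varphi_{0}^{-1}(\bm{\beta})$ lands back in $M$ (unit norm, all $k-1$ orthogonality constraints, and $\neq-\bm{v}_{0}$), which follows from $\bm{U}^{\top}\hat{\bm{v}}_{j}=\bm{0}$ and $\bm{v}_{0}^{\top}\hat{\bm{v}}_{j}=0$. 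What your approach buys is brevity and a conceptual reading of the lemma ($M$ is a $(d-k)$-sphere minus a point); what the paper's buys is self-containedness, since it never appeals to the standard continuity of quotients of continuous functions. No gap.
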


\begin{proof}

We provide the proof of Lemma~\ref{lemm:homeomorphic} by showing that (i) $\varphi_{0}$ is bijective and continuous and that (ii) $\varphi_{0}^{-1}$ is the inverse function of $\varphi_{0}$ and continuous.
We note that the following property shown in Eq.~\eqref{lemm:homeomorphic_prop1} is satisfied because $\left\{ \hat{\bm{v}}_{1}, \dots ,\hat{\bm{v}}_{k-1}, \bm{v}_{0}, \bm{u}_{1},\dots , \bm{u}_{d-k} \right\}$ is an orthonormal basis for $\mathbb{R}^{d}$:
\begin{align}
&  ^{\forall} \bm{v}\in M,\quad ^{\exists} \bm{\beta}_{1}\in\mathbb{R}^{d-k},\quad ^{\exists} \beta_{2}\in\mathbb{R}\setminus \left\{ -1 \right\},\\
&
    \left\| \bm{\beta}_{1} \right\|_{2}^{2} + \beta_{2}^{2} = 1
  \quad \land \quad
    \bm{v}=\bm{U}\bm{\beta}_{1} + \beta_{2}\bm{v}_{0}
\label{lemm:homeomorphic_prop1}
.
\end{align}

First, we show that $\varphi_{0}$ is injective by proving $^{\forall} \bm{v},\bm{w}\in M,\; \varphi_{0}(\bm{v}) = \varphi_{0}(\bm{w}) \Rightarrow \bm{v} = \bm{w}$.
The property~\eqref{lemm:homeomorphic_prop1} implies that there exist $\bm{\alpha}_{1}, \bm{\beta}_{1}\in\mathbb{R}^{d-k},\; \alpha_{2},\; \beta_{2}\in\mathbb{R}$ such that $\bm{v}=\bm{U}\bm{\alpha}_{1} + \alpha_{2} \bm{v}_{0},\; \bm{w}=\bm{U}\bm{\beta}_{1} + \beta_{2} \bm{v}_{0}$. Then $\varphi_{0}(\bm{v}) = \varphi_{0}(\bm{w})$ leads to the relation $\bm{\beta}_{1} = \frac{1+\beta_{2}}{1+\alpha_{2}}\bm{\alpha}_{1}$. Substituting $\left\| \bm{\alpha}_{1} \right\|_{2}^{2} + \alpha_{2}^{2} = 1,\; \left\| \bm{\beta}_{1} \right\|_{2}^{2} + \beta_{2}^{2} = 1$ for the relation, $(1+\beta_{2})(\alpha_{2} - \beta_{2}) = 0$ holds. Because $\beta_{2} \neq -1$, $\alpha_{2} = \beta_{2}$ is satisfied and $\bm{v} = \bm{w}$ holds.

It is easy to observe that $\varphi_{0}$ is surjective. For every $\bm{\beta}\in\mathbb{R}^{d-k}$, let $\bm{\alpha}_{1} = \frac{2}{1+\left\| \bm{\beta} \right\|_{2}^{2}} \bm{\beta},\; \alpha_{2} = \frac{1-\left\| \bm{\beta} \right\|_{2}^{2}}{1+\left\| \bm{\beta} \right\|_{2}^{2}},\; \bm{v}=\bm{U}\bm{\alpha}_{1} + \alpha_{2} \bm{v}_{0}$. Then, $\bm{v}$ belongs to $M$, $\left\| \bm{\alpha}_{1} \right\|_{2}^{2} + \alpha_{2}^{2} = 1$ is satisfied, and $\varphi_{0}(\bm{v}) = \bm{\beta}$ holds.

We then show that $\varphi_{0}$ is continuous on the domain $M$. Let $f(\bm{w}) = \frac{1}{1+\bm{v}_{0}^{\top}\bm{w}}$ and $g(\bm{w}) = \bm{U}^{\top}\bm{w}$; then, $\varphi_{0}(\bm{w}) = f(\bm{w}) g(\bm{w})$. It is obvious that the function $g$ is continuous, because it is a finite-dimensional linear map. Thus, we provide a proof that the function $f$ is continuous on domain $M$, which is sufficient to show that $^{\forall} \bm{v}\in M,\; ^{\forall} \epsilon \in \left(0,1\right],\; ^{\exists} \delta > 0,\; ^{\forall} \bm{w}\in M,\; \left\|\bm{w} - \bm{v}\right\|_{2} < \delta \Rightarrow \left| f(\bm{w}) - f(\bm{v}) \right| < \epsilon$.
For every $\bm{v}\in M,\; \epsilon\in\left(0,1\right]$, let $\delta = \frac{\left| 1+\bm{v}_{0}^{\top}\bm{v} \right|^2}{3} \epsilon$. Then, every $\bm{w}\in M$ such that $\left\| \bm{w}-\bm{v} \right\|_{2} < \delta$ satisfies
\begin{align*}
  &\left| \left| 1+\bm{v}_{0}^{\top}\bm{w} \right| - \left| 1+\bm{v}_{0}^{\top}\bm{v} \right| \right|
    \leq \left| \bm{v}_{0}^{\top}(\bm{w} - \bm{v}) \right|
    < \frac{\left| 1+\bm{v}_{0}^{\top}\bm{v} \right|^2}{3}\epsilon
\\
  &\Rightarrow \quad
  \frac{ \left\| \bm{w} - \bm{v} \right\|_{2} }{\left| 1+\bm{v}_{0}^{\top}\bm{w} \right| \left| 1+\bm{v}_{0}^{\top}\bm{v} \right|}
  < \frac{\epsilon}{3-\left| 1+\bm{v}_{0}^{\top}\bm{v} \right|\epsilon}
.
\end{align*}
The condition $\epsilon\in\left(0,1\right]$ leads to $3-\left| 1+\bm{v}_{0}^{\top}\bm{v} \right|\epsilon \geq 1$. Therefore,
\begin{align*}
  \left| f(\bm{w}) - f(\bm{v}) \right| \leq \frac{ \left\| \bm{w} - \bm{v} \right\|_{2} }{\left| 1+\bm{v}_{0}^{\top}\bm{w} \right| \left| 1+\bm{v}_{0}^{\top}\bm{v} \right|} < \epsilon
,
\end{align*}
holds. The continuity of the function $f$ implies that the function $\varphi_{0}$ is continuous.

It is easy to observe that $\varphi_{0}^{-1}$ is the inverse function of $\varphi_{0}$ because $^{\forall} \bm{v}\in M,\; (\varphi_{0}^{-1} \circ \varphi_{0})(\bm{v}) = \bm{v}$ and $^{\forall} \bm{\beta}\in\mathbb{R}^{d-k},\; (\varphi_{0} \circ \varphi_{0}^{-1})(\bm{\beta}) = \bm{\beta}$ hold.

Finally, we provide the proof that $\varphi_{0}^{-1}$ is continuous on the domain $\mathbb{R}^{d-k}$. Let $h(\bm{\beta}) = 1+\bm{\beta}^{\top}\bm{\beta},\; q(\bm{\beta}) = 1-\bm{\beta}^{\top}\bm{\beta},\; r(\bm{\beta}) = \bm{U}\bm{\beta},\; \bm{\beta}\in\mathbb{R}^{d-k}$; then, $\varphi_{0}^{-1}(\bm{\beta}) = \frac{2}{h(\bm{\beta})} r(\bm{\beta}) + \frac{q(\bm{\beta})}{h(\bm{\beta})}v_{0}$. It is sufficient to show that $h$ and $q$ are continuous on the domain $\mathbb{R}^{d-k}$. To show the continuity of $h$, it is sufficient to show that $^{\forall} \bm{\beta}\in\mathbb{R}^{d-k},\; ^{\forall} \epsilon \in \left(0,1\right],\; ^{\exists} \delta > 0,\; ^{\forall} \bm{\gamma}\in\mathbb{R}^{d-k},\; \left\| \bm{\gamma} - \bm{\beta} \right\|_{2} < \delta \Rightarrow \left| h(\bm{\gamma}) - h(\bm{\beta}) \right| < \epsilon$. For every $\bm{\beta}\in\mathbb{R}^{d-k},\; \epsilon\in\left(0,1\right]$, let $\delta = \frac{\epsilon}{\left\| \bm{\beta} \right\|_{2} + \sqrt{1 + \left\| \bm{\beta} \right\|_{2}^{2}}}$. Then, every $\bm{\gamma}\in\mathbb{R}^{d-k}$ such that $\left\| \bm{\gamma} - \bm{\beta} \right\|_{2} < \delta$ satisfies
\begin{align*}
  &\left| h(\bm{\gamma}) - h(\bm{\beta}) \right|
  \leq \left\| \bm{\gamma} - \bm{\beta} \right\|_{2}^{2} + 2\left\| \bm{\beta} \right\|_{2} \left\| \bm{\gamma} - \bm{\beta} \right\|_{2}
\\
  &< \left( \frac{\epsilon}{\left\| \bm{\beta} \right\|_{2} + \sqrt{1 + \left\| \bm{\beta} \right\|_{2}^{2}}} \right)^{2}
      + \frac{ 2\left\| \bm{\beta} \right\|_{2} \epsilon}{\left\| \bm{\beta} \right\|_{2} + \sqrt{1 + \left\| \bm{\beta} \right\|_{2}^{2}}}
,\quad
  \text{we apply $\epsilon\in\left(0,1\right]$,}
\\
  &\leq \frac{\epsilon}{ \left( \left\| \bm{\beta} \right\|_{2} + \sqrt{1 + \left\| \bm{\beta} \right\|_{2}^{2}} \right)^2}
        + \frac{ 2\left\| \bm{\beta} \right\|_{2} \epsilon}{\left\| \bm{\beta} \right\|_{2} + \sqrt{1 + \left\| \bm{\beta} \right\|_{2}^{2}}}
  = \epsilon
.
\end{align*}
The above inequalities show that the function $h$ is continuous on the domain $\mathbb{R}^{d-k}$. The continuity of $q$ is proven in the same way. The continuity of $r$ is obvious, because it is a finite-dimensional linear mapping. Therefore, it is proven that $\varphi_{0}^{-1}$ is continuous on the domain $\mathbb{R}^{d-k}$.

These results means that $M$ and $\mathbb{R}^{d-k}$ are homeomorphic and $\varphi_{0}$ is a homeomorphism.
\end{proof}



\section{Dataset Description}
The datasets adopted for evaluation are as follows. 
\begin{enumerate}
\item WBC: Breast Cancer Wisconsin (Diagnostics) dataset is a classification dataset, which has records of measurement for breast cancer cases. There are two classes, benign and malignant. The malignant class of this dataset is considered as outliers.
\item Pendigits: This dataset contains 10 classes corresponding to the digits ranging from 0 to 9, with examples created by different handwriting. Class 4, defined here as outliers, is down sampled to 20 objects only.
\item Wine: A multiclass classification dataset with 13 attributes and 3 classes. These data are the result of a chemical analysis of wines grown in the same region in Italy but derived from three different cultivars. Classes 2 and 3 are used as inliers while class 1 is down sampled to 10 instances to be used as outliers. 
\item Vertebral: The Vertebral Column dataset is a bio-medical multiclass classification dataset with six attributes, Each patient is represented in the dataset by six bio-mechanical attributes, THe class ``Abnormal'' is the majority class with 210 instances that are used as inliers and ``Normal'' is down sampled from 100 to 30 instances as outliers. 
\item Thyroid:
The thyroid disease (ann-thyroid) dataset is a three-class dataset with 6 real and 15 categorical attributes. It has 3772 training instances, with only 6 real attributes. The ``hyperfunction'' class is treated as an outlier class and the other two classes are inliers, because hyperfunction is a clear minority class.

\end{enumerate}


\begin{thebibliography}{47}

\bibitem[\protect\astroncite{Absil et~al.}{2007}]{Absil2007}
Absil, P.~A., Mahony, R., and Sepulchre, R. (2007).
\newblock {\em Optimization Algorithms on Matrix Manifolds}.
\newblock Princeton University Press.

\bibitem[\protect\astroncite{Bickel and Fr\"{u}hwirth}{2006}]{Bickel2006}
Bickel, D.~R. and Fr\"{u}hwirth, R. (2006).
\newblock On a fast, robust estimator of the mode: Comparisons to other robust
  estimators with applications.
\newblock {\em Computational Statistics \& Data Analysis}, 50(12):3500--3530.

\bibitem[\protect\astroncite{Botev et~al.}{2010}]{Botev2010}
Botev, Z.~I., Grotowski, J.~F., and Kroese, D.~P. (2010).
\newblock Kernel density estimation via diffusion.
\newblock {\em Annals of Statistics}, 38(5):2916--2957.

\bibitem[\protect\astroncite{Brooks et~al.}{2013}]{Brooks2013}
Brooks, J.~P., Dula, J.~H., and Boone, E.~L. (2013).
\newblock A pure l1-norm principal component analysis.
\newblock {\em Computational statistics \& data analysis}, 61:83--98.

\bibitem[\protect\astroncite{Cand\`{e}s et~al.}{2011}]{Candes2011}
Cand\`{e}s, E.~J., Li, X., Ma, Y., and Wright, J. (2011).
\newblock Robust principal component analysis?
\newblock {\em Journal of the ACM}, 58(3).

\bibitem[\protect\astroncite{Chac{\'o}n}{2020}]{doi:10.1111/insr.12340}
Chac{\'o}n, J.~E. (2020).
\newblock The modal age of statistics.
\newblock {\em International Statistical Review}, 88(1):122--141.

\bibitem[\protect\astroncite{Cheng}{1995}]{400568}
Cheng, Y. (1995).
\newblock Mean shift, mode seeking, and clustering.
\newblock {\em IEEE Transactions on Pattern Analysis and Machine Intelligence},
  17(8):790--799.

\bibitem[\protect\astroncite{Critchley}{1985}]{Frank1985}
Critchley, F. (1985).
\newblock Influence in principal components analysis.
\newblock {\em Biometrika}, 72(3):627--636.

\bibitem[\protect\astroncite{Croux et~al.}{2007}]{Croux2007}
Croux, C., Filzmoser, P., and Oliveira, M.~R. (2007).
\newblock Algorithms for projection-pursuit robust principal component
  analysis.
\newblock {\em Chemometrics and Intelligent Laboratory Systems},
  87(2):218--225.

\bibitem[\protect\astroncite{Croux and Ruiz-Gazen}{2005}]{Croux2005}
Croux, C. and Ruiz-Gazen, A. (2005).
\newblock High breakdown estimators for principal components: the
  projection-pursuit approach revisited.
\newblock {\em Journal of Multivariate Analysis}, 95(1):206--226.

\bibitem[\protect\astroncite{Dalenius}{1965}]{10.2307/2343439}
Dalenius, T. (1965).
\newblock The mode--a neglected statistical parameter.
\newblock {\em Journal of the Royal Statistical Society. Series A (General)},
  128(1):110--117.

\bibitem[\protect\astroncite{Ding et~al.}{2006}]{Ding2006}
Ding, C. H.~Q., Zhou, D., He, X., and Zha, H. (2006).
\newblock R1-{PCA}: rotational invariant l1-norm principal component analysis
  for robust subspace factorization.
\newblock In {\em Proceedings of the 23rd International Conference on Machine
  Learning} (vol.~148, pp.~281--288). ACM.

\bibitem[\protect\astroncite{Fukunaga and
  Hostetler}{1975}]{DBLP:journals/tit/FukunagaH75}
Fukunaga, K. and Hostetler, L. (1975).
\newblock The estimation of the gradient of a density function, with
  applications in pattern recognition.
\newblock {\em IEEE Transactions on Information Theory}, 21(1):32--40.

\bibitem[\protect\astroncite{Hubert et~al.}{2005}]{Hubert2005}
Hubert, M., Rousseeuw, P.~J., and Branden, K.~V. (2005).
\newblock ROBPCA: A new approach to robust principal component analysis.
\newblock {\em Technometrics}, 47(1):64--79.

\bibitem[\protect\astroncite{Jolliffe}{2002}]{Jolliffe2002}
Jolliffe, I. (2002).
\newblock {\em Principal component analysis}.
\newblock Springer Verlag, New York.

\bibitem[\protect\astroncite{Kemp and Silva}{2012}]{KEMP201292}
Kemp, G.~C. and Silva, J.~S. (2012).
\newblock Regression towards the mode.
\newblock {\em Journal of Econometrics}, 170(1):92--101.

\bibitem[\protect\astroncite{Kwak}{2008}]{Kwak2008}
Kwak, N. (2008).
\newblock Principal component analysis based on l1-norm maximization.
\newblock {\em IEEE Transactions on Pattern Analysis and Machine Intelligence},
  30(9):1672--1680.

\bibitem[\protect\astroncite{Lee}{1989}]{LEE1989337}
Lee, M. (1989).
\newblock Mode regression.
\newblock {\em Journal of Econometrics}, 42(3):337--349.

\bibitem[\protect\astroncite{Lerman and
  Maunu}{2018}]{DBLP:journals/pieee/LermanM18}
Lerman, G. and Maunu, T. (2018).
\newblock An overview of robust subspace recovery.
\newblock {\em Proceedings of the IEEE}, 106(8):1380--1410.

\bibitem[\protect\astroncite{Lerman et~al.}{2015}]{Lerman2015}
Lerman, G., McCoy, M.~B., Tropp, J.~A., and Zhang, T. (2015).
\newblock Robust computation of linear models by convex relaxation.
\newblock {\em Foundations of Computational Mathematics}, 15(2):363--410.

\bibitem[\protect\astroncite{Li and Chen}{1985}]{Li1985}
Li, G. and Chen, Z. (1985).
\newblock Projection-pursuit approach to robust dispersion matrices and
  principal components: Primary theory and monte carlo.
\newblock {\em Journal of the American Statistical Association},
  80(391):759--766.

\bibitem[\protect\astroncite{Meyer}{2000}]{Meyer2000}
Meyer, C.~D. (2000).
\newblock {\em Matrix analysis and applied linear algebra}.
\newblock Siam.

\bibitem[\protect\astroncite{Miyagawa et~al.}{2018}]{8451752}
Miyagawa, S., Yoshizawa, S., and Yokota, H. (2018).
\newblock Trimmed median pca for robust plane fitting.
\newblock In {\em the 25th IEEE International Conference on Image Processing} (pp.~753--757).

\bibitem[\protect\astroncite{Nie et~al.}{2014}]{Nie2014}
Nie, F., Yuan, J., and Huang, H. (2014).
\newblock Optimal mean robust principal component analysis.
\newblock In {\em Proceedings of the 31st International Conference on Machine
  Learning} (pp.~1062--1070).

\bibitem[\protect\astroncite{Ota et~al.}{2019}]{ota2019}
Ota, H., Kato, K., and Hara, S. (2019).
\newblock Quantile regression approach to conditional mode estimation.
\newblock {\em Electronic Journal of Statistics}, 13(2):3120--3160.

\bibitem[\protect\astroncite{Parzen}{1962}]{Parzen1962}
Parzen, E. (1962).
\newblock On estimation of a probability density function and mode.
\newblock {\em Annals of Mathematical Statistics}, 33(3):1065--1076.

\bibitem[\protect\astroncite{Pimentel-Alarc{\'o}n and
  Nowak}{2017}]{Pimentel2017}
Pimentel-Alarc{\'o}n, D. and Nowak, R. (2017).
\newblock Random consensus robust PCA.
\newblock {\em Electronic Journal of Statistics}, 11(2):5232--5253.

\bibitem[\protect\astroncite{Rahmani and Atia}{2017}]{Rahmani2017}
Rahmani, M. and Atia, G. (2017).
\newblock Coherence pursuit: Fast, simple, and robust subspace recovery.
\newblock In {\em Proceedings of the 34th International Conference on Machine
  Learning} (vol.~70, pp.~2864--2873).

\bibitem[\protect\astroncite{Rao}{1983}]{Rao1983}
Rao, P., editor (1983).
\newblock {\em Nonparametric Functional Estimation}.
\newblock Probability and Mathematical Statistics: A Series of Monographs and
  Textbooks. Academic Press.

\bibitem[\protect\astroncite{Sando et~al.}{2019}]{Sando2019InformationGO}
Sando, K., Akaho, S., Murata, N., and Hino, H. (2019).
\newblock Information geometry of modal linear regression.
\newblock {\em Information Geometry}, 2(1):43--75.

\bibitem[\protect\astroncite{Schmitt and Vakili}{2016}]{Schmitt2016}
Schmitt, E. and Vakili, K. (2016).
\newblock The fastHCS algorithm for robust PCA.
\newblock {\em Statistics and Computing}, 26(6):1229--1242.

\bibitem[\protect\astroncite{Scrucca}{2011}]{DBLP:journals/csda/Scrucca11}
Scrucca, L. (2011).
\newblock Model-based SIR for dimension reduction.
\newblock {\em Computational Statistics \& Data Analysis}, 55(11):3010--3026.

\bibitem[\protect\astroncite{Sheather and Jones}{1991}]{Sheather1991}
Sheather, S.~J. and Jones, M.~C. (1991).
\newblock A reliable data-based bandwidth selection method for kernel density
  estimation.
\newblock {\em Journal of the Royal Statistical Society. Series B
  (Methodological)}, 53(3):683--690.

\bibitem[\protect\astroncite{Shi et~al.}{2009}]{Shi2009}
Shi, X., Wu, Y., and Miao, B. (2009).
\newblock A note on the convergence rate of the kernel density estimator of the
  mode.
\newblock {\em Statistics \& Probability Letters}, 79(17):1866--1871.

\bibitem[\protect\astroncite{Silverman}{1981}]{silverman_mode1981}
Silverman, B.~W. (1981).
\newblock Using kernel density estimates to investigate multimodality.
\newblock {\em Journal of the Royal Statistical Society. Series B
  (Methodological)}, 43(1):97--99.

\bibitem[\protect\astroncite{Silverman}{1986}]{Silverman86}
Silverman, B.~W. (1986).
\newblock {\em Density Estimation for Statistics and Data Analysis}.
\newblock Chapman \& Hall.

\bibitem[\protect\astroncite{Terrell}{1990}]{Terrell1990}
Terrell, G.~R. (1990).
\newblock The maximal smoothing principle in density estimation.
\newblock {\em Journal of the American Statistical Association},
  85(410):470--477.

\bibitem[\protect\astroncite{Tsakiris and Vidal}{2018}]{JMLR:v19:17-436}
Tsakiris, M.~C. and Vidal, R. (2018).
\newblock Dual principal component pursuit.
\newblock {\em Journal of Machine Learning Research}, 19(1):684--732.

\bibitem[\protect\astroncite{Vieu}{1996}]{Vieu1996}
Vieu, P. (1996).
\newblock A note on density mode estimation.
\newblock {\em Statistics \& Probability Letters}, 26(4):297 -- 307.

\bibitem[\protect\astroncite{Xu et~al.}{2010a}]{DBLP:conf/colt/XuCM10}
Xu, H., Caramanis, C., and Mannor, S. (2010a).
\newblock Principal component analysis with contaminated data: The high
  dimensional case.
\newblock In {\em the 23rd Conference on Learning Theory} (pp.~490--502).

\bibitem[\protect\astroncite{Xu et~al.}{2013}]{Xu2013}
Xu, H., Caramanis, C., and Mannor, S. (2013).
\newblock Outlier-robust PCA: the high-dimensional case.
\newblock {\em IEEE Transactions on Information Theory}, 59(1):546--572.

\bibitem[\protect\astroncite{Xu et~al.}{2010b}]{Xu2010}
Xu, H., Caramanis, C., and Sanghavi, S. (2010b).
\newblock Robust PCA via outlier pursuit.
\newblock In {\em Advances in Neural Information Processing Systems} (pp.~2496--2504).

\bibitem[\protect\astroncite{Yamasaki and
  Tanaka}{2019}]{DBLP:conf/icmla/Yamasaki19}
Yamasaki, R. and Tanaka, T. (2019).
\newblock Kernel selection for modal linear regression: Optimal kernel and irls
  algorithm.
\newblock In {\em Proceedings of the 18th International Conference on Machine
  Learning and Applications}.

\bibitem[\protect\astroncite{Yang and Xu}{2015}]{Yang2015}
Yang, W. and Xu, H. (2015).
\newblock A unified framework for outlier-robust pca-like algorithms.
\newblock In {\em Proceedings of the 32st International Conference on Machine
  Learning} (pp.~484--493).

\bibitem[\protect\astroncite{Yao and Li}{2014}]{Yao2014}
Yao, W. and Li, L. (2014).
\newblock A new regression model: Modal linear regression.
\newblock {\em Scandinavian Journal of Statistics}, 41(3):656--671.

\bibitem[\protect\astroncite{Zhang and Lerman}{2014}]{Zhang2014}
Zhang, T. and Lerman, G. (2014).
\newblock A novel m-estimator for robust PCA.
\newblock {\em Journal of Machine Learning Research}, 15(1):749--808.

\bibitem[\protect\astroncite{Zhao et~al.}{2018}]{zhao2018}
Zhao, J., Yu, G., and Liu, Y. (2018).
\newblock Assessing robustness of classification using an angular breakdown
  point.
\newblock {\em Annals of Statistics}, 46:3362--3389.

\end{thebibliography}
\end{document}